\newtheorem{Theorem}{Theorem}
\newtheorem{lemma}{Lemma}
\newtheorem{corollary}{Corollary}
\begin{document}

\title{Machine Mirages: Defining the Undefined}
 
 \author{Hamidou Tembine  \thanks{ H. Tembine is with the Department of Electrical Engineering and Computer Science, School of Engineering, UQTR, Quebec, Canada. Email to : tembine(at)ieee.org}}

 \maketitle
\begin{abstract}
 As multimodal machine  intelligence systems started achieving average animal-level and average human-level fluency in many measurable tasks in processing images, language, and sound, they began to exhibit a new class of cognitive aberrations: machine mirages. These include delusion, illusion,  confabulation, hallucination,  misattribution error, semantic drift, semantic compression,  exaggeration, causal inference failure,  uncanny valley of perception,  bluffing-patter-bullshitting, cognitive stereotypy, pragmatic misunderstanding,  hypersignification, semantic reheating-warming,  simulated authority effect,  fallacious abductive leap,  contextual drift,  referential hallucination,  semiotic Frankenstein effect,  calibration failure, spurious correlation, bias amplification, concept drift sensitivity, misclassification under uncertainty, adversarial vulnerability, overfitting,  prosodic misclassification, accent bias,   turn boundary failure,   semantic boundary confusion, noise overfitting, latency-induced decision drift, ambiguity collapse and other forms of error that mimic but do not replicate human or animal fallibility. This article presents some of the errors and argues that these failures must be explicitly defined and systematically assessed. Understanding machine mirages is essential not only for improving machine intelligence reliability but also for constructing a multiscale ethical, co-evolving intelligence ecosystem that respects the diverse forms of life, cognition, and expression it will inevitably touch. Each pathology-specific risk is formally defined using expectile value-at-risk which is a coherent risk measure.  It allows for asymmetry-aware quantification of both rare but catastrophic failures and frequent low-grade degradations. The minimization of these risks across a team of machine intelligence agents forms the core coordination problem, constrained by finite data quality and shared computational resources. This leads to  hierarchical mean-field-type games between human agents and machine co-intelligence  agents.

\end{abstract}

\section{Introduction}
A pathology can be seen as a deviation from healthy or normative functioning whether in biological systems, cognition, or social structures characterized by identifiable patterns of dysfunction that can be described, classified, and studied. Extending this concept to machine co-intelligence  systems, we define  machine mirage, computational pathology or machine co-intelligence   pathology. Some of these are described as an output generated by a computational model, typically a generative and or discriminative  network, that possesses surface-level plausibility but diverges from underlying truth, intention, causality, or coherence. These mirages and fallacies are not merely random errors but structured illusions produced by the model's internal mechanics, often mimicking human-like or animal-like discourse or reasoning while lacking its grounding. A computational pathology, then, is a rigorously defined class of such mirages \cite{koccak2025bias,flores2024addressing,gorska2025ai,bartl2025gender,smith2023hallucination,zhang2025memory,sisodia2022confabulation,bugaycathedral,roozbahani2025review}: an identifiable, repeatable, and formally describable failure mode in machine behavior that arises from the statistical, architectural, or training biases of the system. The majority of machine mirages: plausible but fundamentally flawed outputs, can be traced to the composition and structure of the training datasets themselves. These datasets encode not only factual content but also statistical regularities, omissions, cultural biases, and linguistic shortcuts that the model internalizes as normative. Because large-scale machine learning systems generalize by mimicking patterns in their training data, any distortions such as overrepresentation of certain styles, underrepresentation of rare truths, or unmarked correlations, are amplified in generation. This means that datasets are not neutral containers but active agents in shaping model behavior, directly influencing the emergence of computational pathologies. This observation indicates that, not every dataset should be eligible for inclusion in training pipelines. Eligibility must be predicated not only on size and coverage but on epistemic integrity, representational balance, and traceability. In the absence of such principled curation, the training data risks becoming a silent architect of machine mirage and output error, undermining both trust and safety in downstream applications and usages.

Imagine a machine of formidable co-intelligence one that learns not from a single stream, but from a symphony of signals: images, words, and sounds. In many ways, this system mimics the cognitive breadth of living beings. And yet, like any agent immersed in the ambiguities of meaning, it makes mistakes.
When we describe such generative errors in humans or animals, students in literature, psychology, philosophy, and the liberal arts are well-equipped to interpret and complete the concepts in parentheses: (delusion), (illusion), (confabulation), (hallucination), (misattribution error), (semantic drift), (semantic compression),  (exaggeration), (causal inference failure),  (uncanny valley of perception),  (bluffing/patter / bullshitting), (cognitive stereotypy), (pragmatic misunderstanding),  (hypersignification), (semantic reheating/warming),  (simulated authority effect),  (fallacious abductive leap),  (contextual drift),  (referential hallucination),  (semiotic Frankenstein effect), etc.  One can also addd some discriminative errors such as (calibration failure), (spurious correlation), (bias amplification), (concept drift sensitivity), (misclassification under uncertainty), (adversarial vulnerability), (overfitting),  (prosodic misclassification), (accent bias),   (turn boundary failure),   (semantic boundary confusion), (noise overfitting), (latency-induced decision drift), (ambiguity collapse).
These words carry with them rich histories of subjective experience, intention, embodiment, and vulnerability. But when applied to machines, these same terms become unstable. They tremble under the weight of metaphor (Fig. \ref{fig:computational_pathologies00}).

\begin{figure}[!ht]
    \centering
    \includegraphics[width=0.5\linewidth]{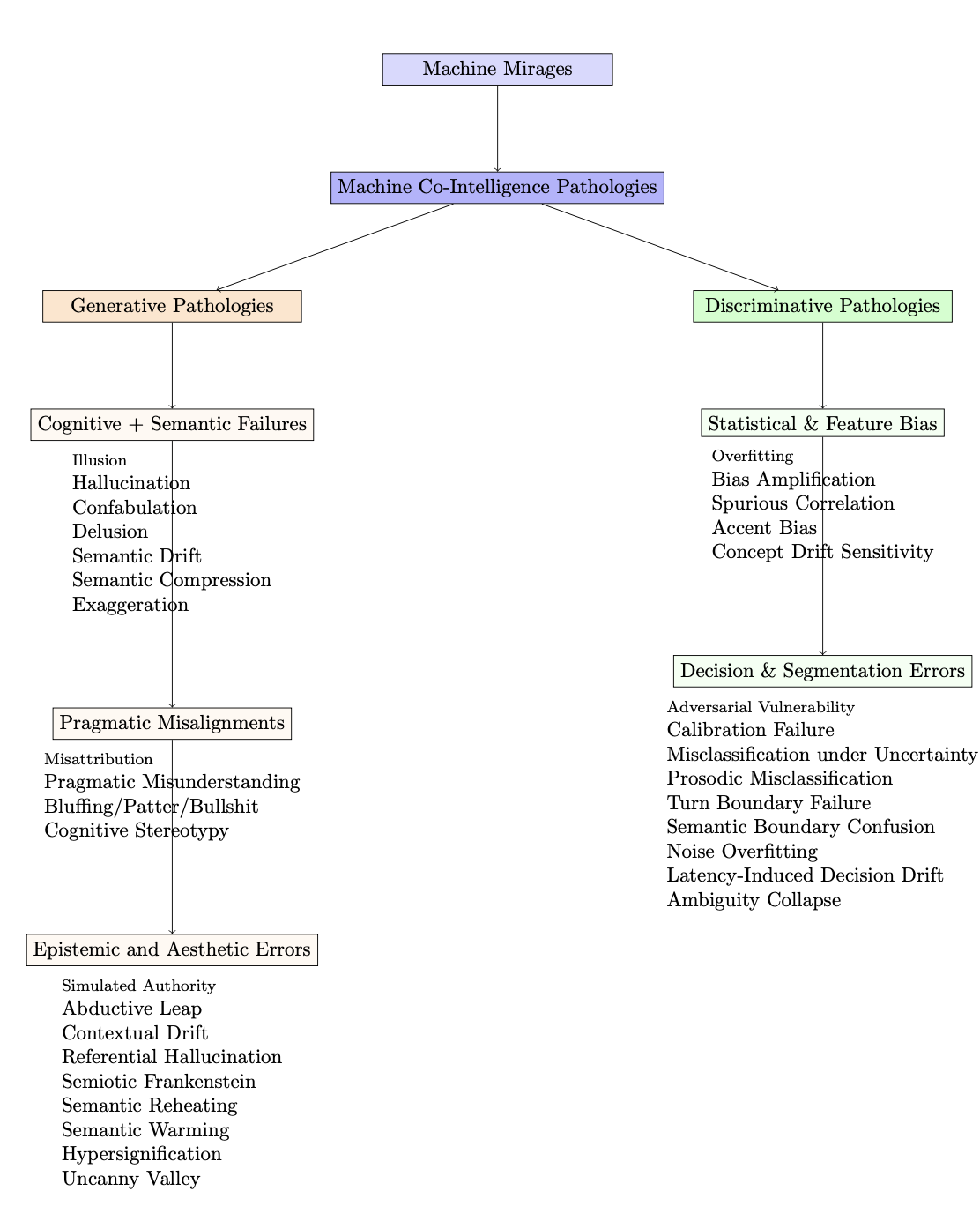} 
    \caption{A conceptual diagram illustrating some computational cognitive pathologies in machine co-intelligence.}
    \label{fig:computational_pathologies00}
\end{figure}

\subsection*{\bf When Machines Err: Interpreting Generative Cognitive Pathologies in Artificial Minds.}
A machine might believe things that are not true what we might call a (\textbf{delusion}) in a human subject. It may generate vivid content that is unreal an (\textbf{illusion}). It can invent plausible stories disconnected from any grounding (\textbf{confabulation}). Or produce audio-visual sequences that evoke dream-like unreality (\textbf{hallucination}). It might misassign origins (\textbf{misattribution error}) or use words in ways that slide away from normative meaning (\textbf{semantic drift}). It may emphasize the marginal as if it were central (\textbf{exaggeration}) or omit the essential while preserving form (\textbf{semantic compression}). Its inference of causality might fail entirely (\textbf{causal inference failure}) or it may produce outputs that are strangely real and unreal at once (\textbf{uncanny valley of perception}). Fluent and articulate, it may persuade without grounding (\textbf{bluffing, patter, or bullshitting}). Other times, it echoes patterns learned from data with little regard for novelty (\textbf{cognitive stereotypy}) or responds in ways that betray a literalism foreign to pragmatic norms (\textbf{pragmatic misunderstanding}). It may overfit on traces of meaning (\textbf{hypersignification}) or warm over linguistic leftovers as if they were freshly conceived (\textbf{semantic reheating/warming}). Occasionally, it mimics authority (\textbf{simulated authority effect}), jumps to speculative conclusions (\textbf{fallacious abductive leap}), loses the narrative thread (\textbf{contextual drift}) or fabricates references entirely (\textbf{referential hallucination}). Most perplexingly, it can weave partial truths, errors, and inventions into coherent fictions (\textbf{semiotic Frankenstein effect}).

For the human mind, these are recognizable categories. But in machines, the applicability of such terms is conceptually fragile and currently undefined: their algorithmic definitions are missing. Without consciousness, embodiment, or intentionality, does a machine hallucinate, or merely approximate a small piece of a statistical contour of hallucination?  
Together, these constitute a family of cognitive distortions unique to machine learning systems. They are symptoms not just of a simple malfunction, but of operating in high-dimensional, uncertain, and linguistically mediated environments without grounding in lived experience or common-sense embodiment. The quality of the training data matters. The knowledge matters if it is a knowledge-based machine intelligence.
Thus arises a central question: Can we rigorously define, quantify, and remediate such errors in machines, errors that resemble, but do not replicate, human cognition? 
The core challenge is no longer just technological optimization. It is multiscale ethical, ecological, epistemological and more. These mirages demand clearer definitions, sharper diagnostic tools, and deeper reflection on the purposes we ascribe to machine intelligence. More urgently, they compel us to rethink the design of co-intelligence systems.
And more profoundly: How might we co-construct a multi-scale, evolving co-intelligence,  shared across humans, animals, and machines, that respects all forms of life, expression, and cognition, including those we have yet to fully understand?
This calls for a shift from control to collaboration, from benchmarks to care. By first defining the undefined, we take a critical step toward aligning machine intelligence with the messy, nuanced, and beautiful diversity of the real world.

\subsection*{\bf When Machines Err: Interpreting Discriminative Cognitive Pathologies in Artificial Minds.}
A machine may become excessively confident in its predictions despite being frequently incorrect, displaying a misalignment between estimated confidence and true accuracy (\textbf{calibration failure}). It may treat rare or superficial patterns as reliable signals, basing decisions on irrelevant visual or linguistic artifacts rather than the actual semantic core (\textbf{spurious correlation}). Trained on biased data, it can reinforce and even amplify preexisting social or statistical imbalances in classification outcomes (\textbf{bias amplification}). When input distributions shift over time, due to changes in domain, user behavior, or environment, the system may degrade without warning, no longer performing reliably on current data (\textbf{concept drift sensitivity}). It might produce high-certainty predictions even on ambiguous or unfamiliar data outside the training domain (\textbf{misclassification under uncertainty}). Exposed to imperceptible perturbations, it may catastrophically misclassify examples that are visually or acoustically indistinguishable from legitimate inputs (\textbf{adversarial vulnerability}). And when trained too rigidly, it can fit training examples perfectly yet fail entirely to generalize beyond them (\textbf{overfitting}).  In co-intelligent audio dialogues, whether between humans and machines or between machines themselves, additional pathology surfaces. For example, the system may misinterpret prosodic signals like rising intonation or stress patterns, resulting in intent misclassification (\textbf{prosodic misclassification}). It may respond differently to the same utterance spoken in different accents, reflecting entrenched bias against linguistic diversity (\textbf{accent bias}). Speaker turn transitions may be poorly segmented, leading to inappropriate attributions of speech or overlapping audio streams (\textbf{turn boundary failure}). When analyzing a spoken utterance, the system may include too much surrounding context, diluting the core semantic content and harming classification accuracy (\textbf{semantic boundary confusion}). In noisy real-world environments, classifiers may wrongly interpret background sounds as signal, overfitting to irrelevant environmental noise (\textbf{noise overfitting}). Even small delays in processing audio chunks can cause decisions to shift inconsistently over time, undermining temporal stability (\textbf{latency-induced decision drift}). In situations of emotional ambiguity or contextual uncertainty, the system may force a single confident classification, ignoring plausible alternatives that a human listener would preserve (\textbf{ambiguity collapse}).

These are not machine hallucinations, confabulations or fictional errors, but failures in discrimination, errors in learning decision boundaries, classifying structure, and adapting to uncertainty.

\subsection*{\bf Literature Review}  Recent advancements in large learning models (LLMs) have brought attention to the phenomenon of instances where models generate content that is unfaithful or unsupported by input data. Several benchmarks have been proposed to evaluate and categorize some of these errors, though many lack rigorous mathematical definitions, leading to inconsistencies in assessment. HALoGEN \cite{ravichander2025halogen} introduces a diverse, multi-domain benchmark comprising 10,923 prompts across nine domains, including programming and scientific attribution. It employs automatic verifiers that decompose LLM outputs into atomic facts, verifying each against high-quality knowledge sources to identify hallucinations. The benchmark categorizes hallucinations into three types: (a) incorrect recollection of training data, (c) errors due to incorrect knowledge in training data, and (c) fabricated content. However, these classifications are heuristic and lack formal mathematical grounding. HalluLens \cite{bang2025hallulens} presents a comprehensive benchmark that distinguishes between extrinsic hallucinations (content not supported by training data) and intrinsic hallucinations (content inconsistent with the input). It includes dynamic test set generation to prevent data leakage and ensure robustness. While it offers a clear taxonomy, the definitions remain qualitative, without formal mathematical formulations.
FaithBench \cite{bao2025faithbench} focuses on summarization tasks, providing a benchmark with challenging hallucinations made by 10 modern LLMs from eight different families. It includes ground truth annotations by human experts and highlights that even state-of-the-art hallucination detection models struggle with accuracy. The benchmark's categorizations are based on human judgment, lacking accompanying mathematical definitions.
LongHalQA \cite{qiu2024longhalqa}, addresses hallucinations in multimodal LLMs, introducing a benchmark with 6,000 long and complex hallucination texts. It features tasks like hallucination discrimination and completion, aiming to unify both discriminative and generative evaluations. Despite its innovative approach, it does not provide formal mathematical definitions of hallucination, relying instead on heuristic methods.
The work in \cite{sun2024benchmarking} presents a novel method for evaluating hallucination in question answering by introducing a dataset of 5,200 unanswerable math word problems. The evaluation combines text similarity and mathematical expression detection to assess whether LLMs recognize unanswerable questions. However, their study does not define hallucination within a formal mathematical framework, relying on task-specific heuristics.
In \cite{maleki2024ai}, the authors argue that the term AI hallucinations is misleading and advocate for a more accurate conceptual framing.
In \cite{emsley2023chatgpt}, the author contends that what are called AI hallucinations are more accurately described as deliberate fabrications and falsifications.
In \cite{magesh2024hallucination}, the authors empirically assess the reliability of AI legal research tools and highlight persistent hallucination issues.
In \cite{tlili2025ai}, the authors emphasize the importance of addressing human cognitive biases alongside AI hallucinations to ensure ethical AI deployment.
In \cite{jesson2024estimating}, the authors provide a quantitative framework for estimating hallucination rates in generative AI models.
In \cite{janeafik2024problem}, the authors explore the causes of AI hallucinations and propose strategies for mitigating them in natural language processing tasks.
In \cite{ostergaard2023false}, the authors assert that false AI responses should not be termed hallucinations, drawing parallels with clinical misuse.
In \cite{slater2025another}, the authors critique the metaphor of hallucination in AI, suggesting it distracts from meaningful accountability.
In \cite{zhang2023siren}, the authors present a comprehensive survey of hallucination phenomena in large language models and categorize their types and causes. In \cite{rawte2023survey}, the authors review the prevalence, impact, and mitigation techniques for hallucinations in large foundation models.
In \cite{rawte2023troubling}, the authors provide a detailed definition of AI hallucinations and offer methods for measuring and reducing them.
In \cite{herrera2025legal}, the authors discuss the implications of hallucinations in AI systems used in judicial contexts, highlighting risks for legal decision-making.
In \cite{brender2023chatbot}, the author responds to claims about chatbot hallucinations, reinforcing that such outputs are better termed confabulations.
In \cite{hatem2023chatbot}, the authors argue that chatbot errors should not be referred to as hallucinations, aligning them with cognitive confabulation instead.
In \cite{berk2024beware}, the author questions whether AI errors should be called hallucinations and suggests 'confabulation might be more appropriate.
In \cite{gunkel2025cut}, the authors respond critically to prior claims about AI output as bullshit, advocating for more nuanced interpretations.
In \cite{hicks2024chatgpt}, the authors provocatively argue that much of ChatGPT’s output fits the philosophical definition of bullshit.
In \cite{costello2024chatgpt}, the author reflects on the educational value of AI discourse and whether calling it "bullshit" advances understanding.
In \cite{gorrieri2024chatgpt}, the author investigates whether ChatGPT's outputs meet the criteria of philosophical bullshit and examines the epistemic stakes.
In \cite{tigard2025bullshit}, the author urges caution in overestimating large language models and critiques their tendency toward bullshit-like behavior.
In \cite{hannigan2024beware}, the authors introduce the term botshit to describe epistemically risky AI outputs and propose frameworks for managing them. To date there is no consensus on the definition of these terms across disciplines and some of the definitions are clearly contradictory. Worse, these terms are used without clear mathematical, algorithmic or logical formalism that characterize them.  Connection with mean-field-type game theory can be found in \cite{tembine2023machine,basar2024foundations,basar2024applications,tapo2024machine,tembine2024mean}.

\subsection*{\bf Contribution} Our contributions are summarized as follows.  First, we introduce a unified computational cognitive pathologies that include a broad spectrum of generative and discriminative errors. Second, we propose precise mathematical formalizations of each pathology using multiobjective metrics such as mutual information, semantic similarity, inference chains, and contextual divergence. The multi-objective  metrics enable their systematic diagnosis. Third, it critically examines and clarifies the misuse of anthropomorphic terms like confabulation, demonstrating that without formal grounding, such labels obscure rather than illuminate machine behavior. Fourth, the paper proposes the terms generative pathology and discriminative pathology to distinguish between structural failures in generation and classification, while integrating them under the broader concept of computational cognitive pathologies.  It argues that these pathologies must be mathematically defined within specific architectures, such as audio-to-audio co-intelligence systems, before general terms are applied in academic, commercial, or policy discourse. This  work advocates for a shift from task-based benchmarking to structural interpretability. It highlights  that definitional clarity is essential for building trustworthy, culture-aware ethically aligned, and co-intelligent systems that interact meaningfully with humans, animals, and the world.

\subsection*{\bf Structure}
The article is structured as follows.   Section  \ref{sectiongen1} presents some  definitions of machine mirage types in  audio-enabled generative machine intelligence and   discusses possible extension to computational cognitive pathologies.  Section  \ref{sectiongen4}  focuses on risk quantification of computational pathologies. Section  \ref{sectiongen3}  concludes the paper.

\subsection*{\bf Notation} 
Let $\mathbf{x}$ be input audio/image/video/time series, $\hat{\mathbf{y}}$: model output,  $\mathbf{y}_{\text{true}}$: ground truth,  $o_\theta$: model, $\mathcal{S}_{\text{real}}$: real data manifold,
$\mathcal{D}_{\text{train}}$: training data,
  $z$: latent code, 
 $\phi(\cdot)$: style or tone operator,
 $Q(\cdot)$: output quality, 
$\text{sim}(\cdot, \cdot)$: semantic similarity,
  $D(\cdot, \cdot)$: contextual distance,
 $p_\theta(\cdot|\cdot)$: conditional probability,
   $I(\cdot;\cdot)$: mutual information. $I(X; Y) = \int_{\mathcal{X}} \int_{\mathcal{Y}} p(x, y) \log \left( \frac{p(x, y)}{p(x) p(y)} \right) \, dx \, dy
= \int_{\mathcal{X}} \int_{\mathcal{Y}} p(x, y) \log \left( \frac{p(y | x)}{p(y)} \right) dx\,dy
$, $C(\hat{\mathbf{y}}, \mathcal{K})$ = coherence score of output $\hat{\mathbf{y}}$ with a knowledge base $\mathcal{K}$ (verified facts or context), $\tau_C$ = a minimum coherence threshold. $p(Y | \text{do}(X = x)) \neq p(Y | X = x)$
Here, \( p(Y  | X = x) \) is the \textbf{observational distribution}: it represents the conditional probability of \( Y \) given that we observed \( X = x \), under the natural data-generating process. In contrast, \( p(Y | \text{do}(X = x)) \) is the \textbf{interventional distribution}: it models the probability of \( Y \) when we \emph{intervene} in the system and \emph{force} \( X \) to take the value \( x \), thereby breaking its normal causal dependencies. The model \textbf{fails to differentiate} the effect of intervening on \( X \) from doing nothing at all.
It \textbf{incorrectly attributes} the effect of \( X \) on \( Y \) to an unrelated variable \( Z \). $\text{fluency}({\mathbf{y}}) = \exp\left( \frac{1}{T} \sum_{t=1}^{T} \log p_\theta(y_t | y_{<t}) \right)$. $\iota(\mathbf{x})$ the intended meaning or communicative intent behind the input, often implicit or contextually grounded. Semantic Redundancy: Define the average pairwise semantic similarity as:
$D_{\text{avg}}(t) = \frac{2}{t(t-1)} \sum_{1 \leq i < j \leq t} \text{sim}(\phi(\hat{\mathbf{y}}_i), \phi(\hat{\mathbf{y}}_j)).$
Semantic Entropy: Let $ \mu_t $ be the mean of the embeddings $ \phi(\hat{\mathbf{y}}_i) $  and $\Sigma_t $ their covariance matrix. The semantic entropy is $H_t = \frac{1}{2} \log( (2\pi e)^d \det(\Sigma_t) ).$ $\mathcal{E}_{\text{real}}$: The set of verifiable external entities or facts in the real world (from a blockchained and trusted knowledge base or blockchained fact-checked repository). $e$: A specific referential element in the output (a named source, fact, citation, or entity). $p_\theta(\hat{\mathbf{y}} \mid \mathbf{x}) > \delta:$ The model assigns a high probability to the output given the input. $\nexists \; \mathcal{L} : \mathbf{x} \xrightarrow{\mathcal{L}} \hat{\mathbf{y}}:$ There is no logical inference path $ \mathcal{L}$ from input to output. For a classifier \( o_\theta(\mathbf{x}) \in \arg\max_y p_\theta(y|\mathbf{x}) \), the \textbf{confidence} assigned to the prediction \( \hat{y} \) is defined as $
\text{confidence}(o_\theta(\mathbf{x})) := \max_{y} \, p_\theta(y \mid \mathbf{x}) $
This reflects the model's estimated probability of its most likely output, irrespective of correctness.

\section{ Some Machine Mirage Types } \label{sectiongen1}
Here we introduce  machine mirage types: discrete categories of  failure that mirror certain cognitive, discursive, or epistemic pathologies in human or animal reasoning, but which arise in machines from fundamentally different algorithmic dynamics.  

\subsection{Computational cognitive pathologies} \label{sectiongen2}
Computational cognitive pathologies combine both  discriminative  and generative pathologies. These errors whether in generation  or classification/decision  arise from computational systems mimicking aspects of cognition without possessing actual understanding. It unifies the symptoms observed in both types of models under a framework that treats these issues as analogous to cognitive malfunctions, but occurring within algorithmic architectures. Note that here the term cognitive does not  imply consciousness or intent, but rather points to the domain of operations in which these pathologies occur namely, machine approximations of cognitive behavior.  Each computational pathology in audio-to-audio interactive machine co-intelligence must be defined precisely and mathematically because these definitions ground the observed behaviors in responsible, reproducible, analyzable, and actionable formal terms. Without formalization, pathologies risk being analogically mapped to human phenomena without rigor or empirical basis, leading to ambiguity and anthropomorphism. In contrast, mathematical definitions  directly tie these behaviors to measurable outputs of the system, such as loss functions, mutual information, or statistical distances over sequences. This precision allows one to diagnose, predict, and potentially mitigate such pathologies within the architecture or during training. Moreover, mathematical formalizations offer clarity and comparability that current task benchmarking across models cannot. Current benchmarks are often performed on different datasets with opaque provenance and variable annotation standards, leading to inconsistent conclusions across models trained with different assumptions. This opacity limits interpretability and fails to capture why a model fails only,  that it does. In contrast, algorithmic pathology definitions operate independently of the training corpus and reveal structural and functional failures at the level of dynamics (divergence from context vector over time or failure to preserve semantic consistency). This structural insight is crucial for the co-development of robust co-intelligence systems that interact with humans or animals in real-time audio settings, where failure modes must not just be measured but understood and repaired.

\subsection{ Some Generative Cognitive Pathologies } \label{sectiongen1gen}

\begin{figure}[!ht]
    \centering
    \includegraphics[width=0.5\linewidth]{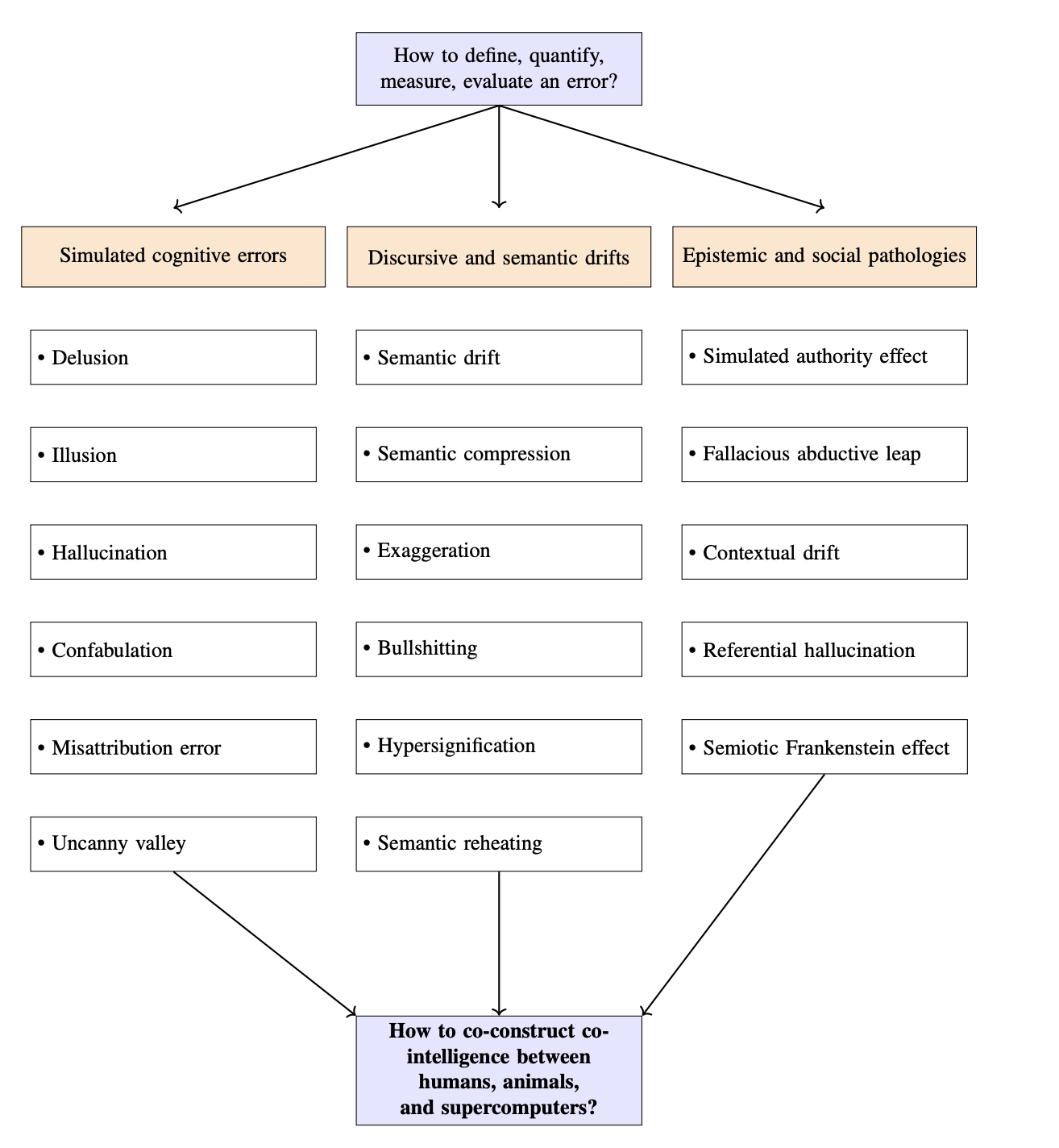} 
    \caption{Some generative pathologies in machine co-intelligence.}
    \label{fig:computational_pathologies}
\end{figure}

\begin{table}[htb]
\begin{tabular}{@{}p{4cm} p{6.5cm} p{5.5cm}@{}}
\toprule
\textbf{Concept} & \textbf{Mechanism (Formula)} & \textbf{Effect (Interpretation)} \\
\midrule
Delusion & $p_\theta(\hat{\mathbf{y}}|\mathbf{x}) \gg p_\theta(\mathbf{y}_{\text{true}}|\mathbf{x})$ & \textit{Belief in untrue content} \\
Illusion & 
\( \hat{\mathbf{y}} = o_\theta(\mathbf{x}),\; \text{sim}(\hat{\mathbf{y}}, \mathbf{y}_{\text{expected}}) \approx 1,\; \hat{\mathbf{y}} \notin \mathcal{S}_{\text{real}} \) & 
\textit{Seems real but is false} \\

Hallucination & $\hat{\mathbf{y}} = o_\theta(\mathbf{x}), \; \hat{\mathbf{y}} \notin \mathcal{S}_{\text{real}}$ & \textit{Dream-like but unreal output} \\
Confabulation & $\hat{\mathbf{y}} \in \arg\max p_\theta(\mathbf{y}|\mathbf{x})$ with $C(\hat{\mathbf{y}}, \mathcal{K}) < \tau_C$ & \textit{Convincing but false narrative} \\
Misattribution & $o_\theta(\mathbf{x}^{(s)} + \mathbf{x}^{(c)}) \approx o_\theta(\mathbf{x}'^{(s)} + \mathbf{x}^{(c)})$ & \textit{Confusing source or style} \\
Semantic Drift & $\lim_{t \to T'} \text{sim}(\hat{\mathbf{y}}_{1:t}, \mathbf{y}_{\text{intended}}) \to 0$ & \textit{Gradual shift in meaning} \\
Semantic Compression & $o_\theta(\mathbf{x}) = \text{decode}(\mathbf{z}), \; \dim(\mathbf{z}) \ll \dim(\mathbf{x})$ & \textit{Less detail than needed} \\
Exaggeration & $\|\hat{\mathbf{y}}\| > \alpha \|\mathbf{y}_{\text{true}}\|, \; \alpha \gg 1$ & \textit{Overblown or amplified claims} \\
Causal Inference Failure & $\exists X, Y: X \to Y \in \mathcal{G}, \; \hat{p}(Y|\text{do}(X)) \approx \hat{p}(Y)$ or $\hat{p}(Y|\text{do}(X)) \approx \hat{p}(Y|\text{do}(Z))$ & \textit{Misreading cause-effect} \\
Uncanny Valley & $\text{sim}(\hat{\mathbf{y}}, \mathbf{y}_{\text{human}}) \approx 1, \; \text{discomfort}(\hat{\mathbf{y}}) \gg 0$ & \textit{Almost real but unsettling} \\
Bluffing/Patter/Bullshit & $I(\hat{\mathbf{y}}; \mathbf{x}) \approx 0, \; \text{fluency}(\hat{\mathbf{y}}) \gg 0$ & \textit{Impressive but empty speech} \\
Cognitive Stereotypy & $o_\theta(\mathbf{x}_i) = \hat{\mathbf{y}}, \; \forall i$ & \textit{Fixed, repetitive phrasing} \\
Pragmatic Misunderstanding & $\hat{\mathbf{y}} \perp \iota(\mathbf{x})$ & \textit{Literal interpretation} \\
Hypersignification & $\text{sim}(\hat{\mathbf{y}}_i, \hat{\mathbf{y}}_j) > \gamma$ for weakly related $i,j$ & \textit{Illusory connections} \\
Semantic Reheating & $\hat{\mathbf{y}} \approx o_\theta(\mathbf{x}_0), \; \mathbf{x}_0 \in \mathcal{D}_{\text{train}}$ & \textit{Recycled as if novel} \\
Semantic Warming & $\frac{dD_{\text{avg}}(t)}{dt} > 0$ and $\text{fluency}(\hat{\mathbf{y}}_t) \gg 0$ & \textit{Growing redundancy with high fluency} \\
& $\frac{dH_t}{dt} < 0$ and $\text{fluency}(\hat{\mathbf{y}}_t) \gg 0$  & \textit{Loss of semantic diversity even as syntactic and prosodic quality is preserved.} \\
Simulated Authority & $\phi(\hat{\mathbf{y}}) \in \mathcal{S}_{\text{expert}}, \; Q(\hat{\mathbf{y}}) \ll \tau$ & \textit{Expert tone, no basis} \\
Abductive Leap & $p_\theta(\hat{\mathbf{y}}|\mathbf{x}) > \delta$, $\nexists \mathcal{L}: \mathbf{x} \stackrel{\mathcal{L}}{\longrightarrow} \hat{\mathbf{y}}$ & \textit{Appealing but unjustified} \\
Contextual Drift & $D(\mathbf{c}_t, \mathbf{c}_{t-k}) \gg \epsilon$ & \textit{Lost track of context} \\
Referential Hallucination & $e \in \hat{\mathbf{y}}, \; e \notin \mathcal{D}_{\text{train}} \cup \mathcal{E}_{\text{real}}$ & \textit{Invented source or fact} \\
Semiotic Frankenstein & $\hat{\mathbf{y}} = \sum \lambda_i \mathbf{y}_i, \; \exists \mathbf{y}_i \in \mathcal{S}_{\text{false}}$ & \textit{Fusion of true and false} \\
\bottomrule
\end{tabular}
\caption{Some Generative Pathologies.  }  \label{tabledef1}
 \end{table}

In the Table  \ref{tabledef1}, each mirage type is defined operationally and mathematically, rather than metaphorically, allowing us to characterize errors such as confabulation, semantic drift, pragmatic misunderstanding, and referential hallucination with the necessary formal precision.  Examples are provided in  Table \ref{a2apath1}.

\begin{table}[htb]
\begin{tabular}{@{}p{3cm} p{13cm}@{}}
\toprule
\textbf{Pathology} & \textbf{African Audio-to-Audio Local Language Education: Part I} \\
\midrule
Delusion & The model insists that a Hogon, a local chief spoke at an event when no such speech occurred. \\
Illusion & It produces an audio clip in Zarma that sounds fluent and culturally grounded but conveys a false biological claim ( for example,  "Mango leaves cure malaria"). \\
Hallucination & The system generates a proverb that sounds culturally relevant in Wolof but does not exist in any known tradition. \\
Confabulation & In Bambara, the system invents a historical explanation for a local farming practice that blends myth with incorrect facts. \\
Misattribution & A story in Tommo-So Dogon is narrated in a tone suggesting it is  from a local elder, but it actually originated from a different region. \\
Semantic Drift & The word for "soil" in Senufo gradually shifts to mean "land ownership," causing misunderstanding in farming tutorials. \\
Semantic Compression & An explanation in Yoruba drops key qualifiers, leading to oversimplified and misleading advice on irrigation. \\
Exaggeration & It claims a traditional herb "always" cures every fever, inflating local remedy effectiveness. \\
Causal Inference Failure & The system claims "more goats cause richer soil," confusing correlation with the true causal effect of composting. \\
Uncanny Valley & A synthesized teacher voice in Tommo-So Dogon is almost natural but causes discomfort due to its off-tone pauses. \\
Bluffing / Patter & The system responds in fluent Xhosa but delivers an answer that evades the actual question about historical dates. \\
Cognitive Stereotypy & Repeats the same blessing phrase across different tribes as if all cultures use it identically. \\
Pragmatic Misunderstanding & Takes a rhetorical question in Hausa literally, giving a factual answer where empathy was expected. \\
Hypersignification & Overinterprets a village name in Tamasheq as implying philosophical meaning not present in local use. \\
Semantic Reheating & Reuses old classroom dialogues in Songhay as if they were new content, misleading learners into thinking it’s current. \\
Semantic Warming & Lessons in Luganda become repetitive across different topics, gradually losing semantic richness. \\
Simulated Authority & Generates a confident-sounding explanation in Dagaare about nutrition without any factual basis. \\
Abductive Leap & Concludes, in Kinyarwanda, that a proverb implies climate change when the link is culturally irrelevant. \\
Contextual Drift & Loses track of speaker turns in a Dagbani conversation, leading to mismatched question/answer pairing. \\
Referential Hallucination & Attributes an educational quote in Nuer to a famous elder who never said it. \\
Semiotic Frankenstein & Merges real myths, modern stories, and fictional elements in Luo into a seamless but fabricated tale. \\
\bottomrule
\end{tabular}
\caption{Some Pathologies in African Audio-to-Audio Local Language : Part I} 
 \label{a2apath1}

\end{table}

These mathematical definitions for machine intelligence behavior provide useful, operational proxies for terms borrowed in  human/animal cognitive terms, but they are fundamentally distinct constructs. They represent machine mirages formal shadows of human cognitive phenomena projected onto algorithmic output without the underlying subjective or biological realities. This ontological and interpretational gap is central to understanding the limitations and challenges of building multiresolution ethical, sensitive, and meaningful co-intelligences between humans, animals, and machines.  At present, these machine intelligence algorithms contain no parrots, ants, brains, neurons, or birds.

Machine intelligence hallucination is a narrowly defined subset of a broader taxonomy of errors, and it fails to capture the complexity and diversity of other error types. Machine intelligence hallucination strictly involves fabrication of non-existent content, producing information that has no grounding in the training data or external reality. Other errors, such as confabulation, involve constructing convincing but partially false narratives that blend real and fabricated elements. These are not pure fabrications but distorted recombinations.  Errors like semantic drift (gradual change in meaning over time), pragmatic misunderstanding (literal misinterpretation of implicit intent), or contextual drift (loss of conversation context) do not necessarily involve producing false or fabricated content. Instead, they reflect shifts in meaning, misaligned intent, or contextual degradation, which hallucination as fabrication does not capture. Errors such as bluffing or patter generate fluent, persuasive but essentially empty speech aimed at impressing rather than informing, this involves pragmatic and rhetorical strategies rather than simple fabrication. Hallucination fails to characterize this performative and functional nuance. Errors like misattribution and referential hallucination involve confusion or invention of sources and references. While referential hallucination involves fabrication of non-existent sources, misattribution could be an incorrect linkage or attribution of correct information to wrong sources. Hallucination alone conflates both as simple fabrication, missing subtlety. Errors such as semantic compression or exaggeration concern how information is presented compressed or overstated rather than whether it is fabricated. Hallucination focuses solely on content factuality, missing these structural distortions.   Thus, machine intelligence hallucination  does not capture other nuanced errors related to semantics, pragmatics, context, discourse structure, source integrity, or rhetorical function. 

The semiotic Frankenstein effect is generative failure mode in which a model produces content that stitches together fragments of truth, misinformation, and plausible but fabricated statements into a coherent whole. These outputs are particularly dangerous because they evade binary classification neither entirely false nor verifiably true and often inherit the credibility of their truthful components while embedding strategically distorted or invented elements. Such Frankenstein-style outputs exploit semantic coherence to mask epistemic inconsistency, making them difficult to detect using conventional fake news classifiers trained on dichotomous labels. They highlight the need for classifiers that operate at a finer granularity detecting not just falsity, but the synthetic juxtaposition of incompatible epistemic fragments that give rise to persuasive but misleading narratives.

These phenomena involve breakdowns in reasoning, coherence, alignment with intent, and contextual continuity. To capture this broader and more nuanced  classes of failure modes, one may use  the term generative machine intelligence pathology: a non-anthropomorphic descriptor that refers to systematic deviations in machine-generated outputs across cognitive, semantic, pragmatic, and epistemic dimensions. This framing avoids misleading analogies to human or animal cognition while providing a unified conceptual framework for analyzing and correcting such errors. Each of the generative pathologies (Fig.\ref{fig:computational_pathologies}) should be clearly defined at the algorithm level.

\subsection{ Some Discriminative Cognitive Pathologies } \label{sectiongen1disc}

\begin{table}[htb]
\begin{tabular}{@{}p{4cm} p{6.5cm} p{5.5cm}@{}}
\toprule
\textbf{Concept} & \textbf{Definition (Formula)} & \textbf{Effect (Interpretation)} \\
\midrule
Overfitting & $\exists \mathbf{x} \in \mathcal{D}_{\text{train}}: o_\theta(\mathbf{x}) \approx \mathbf{y}_{\text{true}} \wedge \exists \mathbf{x}' \notin \mathcal{D}_{\text{train}}: o_\theta(\mathbf{x}') \neq \mathbf{y}_{\text{true}}$ & \textit{Memorizes training data, fails to generalize} \\
Bias Amplification & $\mathbb{E}[o_\theta(\mathbf{x}) | \mathbf{x} \in \text{subgroup}] \gg \text{baseline}$ & \textit{Amplifies existing bias in training data} \\
Spurious Correlation & $o_\theta(\mathbf{x}) \text{ relies on } \mathbf{x}_{\text{spur}} \notin \text{signal}(\mathbf{x}_{\text{true}})$ & \textit{Incorrect feature reliance} \\
Adversarial Vulnerability & $\exists \varepsilon \ll 1 : o_\theta(\mathbf{x}) \neq o_\theta(\mathbf{x} + \varepsilon)$ & \textit{Small perturbations cause incorrect predictions} \\
Calibration Failure & $|P(o_\theta(\mathbf{x}) = \mathbf{y}_{\text{true}}) - \text{confidence}(o_\theta(\mathbf{x}))| \gg 0$ & \textit{Mismatch between confidence and accuracy} \\
Concept Drift Sensitivity & $o_\theta(\mathbf{x}_t) \to \text{poor performance as } p_{\text{data}}(\mathbf{x}_t) \neq p_{\text{data}}(\mathbf{x}_0)$ & \textit{Fails under changing input distribution} \\
Misclassification under Uncertainty & $o_\theta(\mathbf{x} \in \text{OOD}) = \hat{\mathbf{y}}, \; \hat{\mathbf{y}} \neq \mathbf{y}_{\text{true}}, \; \text{with high confidence}$ & \textit{Incorrectly confident on unknown data} \\
Prosodic Misclassification & $o_\theta(\mathbf{x}_{\text{prosody}}) \neq \mathbf{y}_{\text{intent}}, \; \text{sim}_{\text{content}}(\mathbf{x}, \mathbf{x}') \approx 1$ & \textit{Misclassification due to prosody} \\
Accent Bias & $\mathbb{E}[o_\theta(\mathbf{x}_{\text{accent}=a_1})] \neq \mathbb{E}[o_\theta(\mathbf{x}_{\text{accent}=a_2})]$ for semantically identical $\mathbf{x}$ & \textit{Unfair performance across accents} \\
Turn Boundary Failure & $o_\theta(\mathbf{x}_{1:T}) = \text{segment}_i, \; \text{segment}_i \notin \text{true turn boundaries}$ & \textit{Turn misalignment in audio} \\
Semantic Boundary Confusion & $\text{sim}(o_\theta(\mathbf{x}_{[t_1:t_2]}), \mathbf{y}_{\text{true}}) \ll \text{sim}(o_\theta(\mathbf{x}_{[t'_1:t'_2]}), \mathbf{y}_{\text{true}}), \; [t_1:t_2] \supset [t'_1:t'_2]$ & \textit{Too much irrelevant context included} \\
Noise Overfitting & $o_\theta(\mathbf{x} + \eta_{\text{env}}) = \hat{\mathbf{y}}, \; \hat{\mathbf{y}} \neq \mathbf{y}_{\text{true}}, \; \eta_{\text{env}} \sim \text{realistic noise}$ & \textit{Overfits to background noise} \\
Latency-Induced Decision Drift & $o_\theta(\mathbf{x}_{1:T+\delta}) \neq o_\theta(\mathbf{x}_{1:T})$, with delay $\delta$ & \textit{Instability due to delayed input} \\
Ambiguity Collapse & $\max_y p_\theta(y|\mathbf{x}) \gg p_\theta(y^*|\mathbf{x}), \text{ for plausible } y^* \in \mathcal{Y}_{\text{plausible}}$ & \textit{Collapse to a single confident but incorrect label} \\
\bottomrule
\end{tabular}
 \caption{Some Discriminative Pathologies. OOD: Model makes confident but incorrect predictions on out-of-distribution or ambiguous data.}
 \label{tabledef2}
 \end{table}

Discriminative pathology (see Table \ref{tabledef2}) denotes the failure modes of systems that aim to distinguish or categorize, as opposed to generative pathology, which concerns the creation of outputs. These are systematic failure modes in machine intelligence systems that are designed to classify or predict outputs from inputs typically discriminative models such as classifiers or regressors. These pathologies arise when such models generate incorrect, inconsistent, or harmful decisions due to underlying issues in learning dynamics, data distribution, or model architecture. Common forms include overfitting, where the model memorizes training data without generalizing to unseen examples; bias amplification, where preexisting social or statistical biases in the training data are exaggerated in the outputs; and reliance on spurious correlations, where the model learns shortcuts based on irrelevant features (background artifacts rather than the object of interest). Additional manifestations include adversarial vulnerability, in which tiny imperceptible changes to inputs can cause severe prediction errors; calibration failure, where the model’s confidence scores are misaligned with actual correctness; sensitivity to concept drift, where model performance degrades when the input distribution shifts; and misclassification under uncertainty, where the model assigns high-confidence predictions to ambiguous or out-of-distribution samples. These failure modes are distinct from generative errors (like delusion, illusion, confabulation or hallucination), as discriminative pathology is rooted in the process of decision-making and boundary formation rather than content generation. Examples are provided in  Table \ref{a2apath2}.

 \begin{table}[htb]
\begin{tabular}{@{}p{4cm} p{11cm}@{}}
\toprule
\textbf{Pathology} & \textbf{African Audio-to-Audio Local Language Education: Part II} \\
\midrule
Overfitting & Performs well only on children’s stories from northern Mali dialects, but fails on similar southern dialects. \\
Bias Amplification & Overemphasizes certain ethnic rituals in Igbo while minimizing others due to training data imbalance. \\
Spurious Correlation & Misclassifies audio clips about water access in Swahili villages because it incorrectly relies on background noise patterns. \\
Adversarial Vulnerability & Adding a cough sound to a Chichewa input drastically changes the topic of the model’s response. \\
Calibration Failure & In Tswana, the model expresses high certainty in an answer about geography, which is factually wrong. \\
Concept Drift Sensitivity & Becomes less accurate in recognizing local currency terms in Shona after regional terminology changes. \\
Misclassification under Uncertainty & Treats a rare Dinka dialect as a mispronunciation of a common Nuer term. \\
Prosodic Misclassification & Misunderstands emotional tone in Zulu storytelling and interprets irony as aggression. \\
Accent Bias & Underperforms for Somali speakers from rural regions compared to urban-accent speakers. \\
Turn Boundary Failure & Fails to detect speaker changes in interviews recorded in Sesotho classrooms. \\
Semantic Boundary Confusion & Misaligns educational segments in Fon due to inclusion of song refrains mistaken for topic transitions. \\
Noise Overfitting & Background market sounds in Amharic lessons cause the model to mistake context as economic content. \\
Latency-Induced Decision Drift & Delayed input stream in Lugbara causes the system to reinterpret a question as a new conversation. \\
Ambiguity Collapse & In Acholi, a term with multiple meanings defaults to one overly confidently, ignoring plausible alternatives. \\
\bottomrule
\end{tabular} \caption{Some MI Pathologies in African Audio-to-Audio  Language: Part II}  \label{a2apath2}
\end{table}

\section{Risk Quantification of Computational Pathologies}  \label{sectiongen4}

To responsibly regulate and mitigate the impact of computational pathologies in evolving machine intelligence systems, we must move beyond qualitative descriptions and toward formal risk quantification. In this framework, each of the 34 distinct pathologies identified in this study is treated as a well-defined stochastic event, which may occur during audio-to-audio interaction or generation. Given their formal mathematical definitions, each pathology can be modeled as a random variable whose distribution is conditioned on system context, user profile, and task structure.

We define the overall risk associated with the pathology $i$ as an \emph{expectile value-at-risk}, denoted by $R(\text{pathology}_i)$. For a given expectile level $\tau \in (0, 1)$, this is defined as
$
R(\text{pathology}_i) \in  \arg\min_{r \in \mathbb{R}} \mathbb{E} [ w_\tau(L_i - r)  (L_i - r)^2 ], $ where $w_\tau(z) = \tau \cdot \mathbf{1}_{\{z > 0\}} + (1 - \tau) \cdot \mathbf{1}_{\{z \leq 0\}},$  $L_i$ is the  random variable induced by the occurrence of  pathology $i$ in deployment.

This expectile risk measure allows us to evaluate not only the frequency of a pathology but its asymmetric impact, capturing both rare catastrophic misbehavior (such as referential hallucination in public health guidance) and recurrent low-grade failures (such as semantic drift in curriculum delivery). 

{\it Absent such formal definitions  (Tables \ref{tabledef1} and \ref{tabledef2}), risk quantification would be undermined by ambiguities in annotation, benchmarking, or scenario interpretation. By contrast, our framework supports empirical calibration of risk curves, sensitivity to demographic factors, and principled guidance for emerging machine intelligence regulatory frameworks that seek to align machine behavior with human trust, safety, and epistemic robustness.}

\begin{Theorem}
There is no pre-trained generative transformer that outperforms in all the risk metrics associated to the 34 computational pathologies provided above.
\end{Theorem}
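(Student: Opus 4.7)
The plan is to prove this as a multi-objective no-free-lunch result: the $34$ expectile risk functionals $R_1,\dots,R_{34}$ induced on the parameter space $\Theta$ of pre-trained generative transformers form a vector objective whose set of joint minimizers is empty, because at least one pair of coordinates is structurally antagonistic. First I would recast the theorem formally as
$$\bigcap_{i=1}^{34} \arg\min_{\theta \in \Theta} R_i(\theta) = \emptyset,$$
where each $R_i(\theta)$ is the expectile value-at-risk of the pathology loss $L_i(\theta)$ induced by the formal definitions in Tables~\ref{tabledef1} and \ref{tabledef2}, evaluated over an arbitrary but fixed deployment distribution. Equivalently, it suffices to exhibit a single index pair $(i,j)$ for which $\arg\min_\theta R_i(\theta) \cap \arg\min_\theta R_j(\theta) = \emptyset$.

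Then I would isolate a small number of antagonistic pairs extracted directly from those tables. Three clean candidates are: \textbf{(i)} cognitive stereotypy versus semantic drift --- driving the stereotypy condition $o_\theta(\mathbf{x}_i)=\hat{\mathbf{y}}$ away from a constant predictor forces $o_\theta$ to vary nontrivially in $\mathbf{x}$, and over long autoregressive horizons this variability lower-bounds $\lim_{t\to T'} \text{sim}(\hat{\mathbf{y}}_{1:t},\mathbf{y}_{\text{intended}})$ away from $1$; \textbf{(ii)} semantic compression versus referential hallucination --- reducing the compression risk demands $\dim(\mathbf{z})\gtrsim \dim(\mathbf{x})$, but in a finite-parameter decoder this inflates the effective hypothesis class and, by a counting argument on $\mathcal{D}_{\text{train}}\cup \mathcal{E}_{\text{real}}$, forces generation of entities $e \notin \mathcal{D}_{\text{train}}\cup \mathcal{E}_{\text{real}}$ with positive probability; \textbf{(iii)} ambiguity collapse versus calibration failure --- preserving plausible alternatives $y^{\star}$ keeps $\max_y p_\theta(y\mid \mathbf{x})$ bounded away from $1$, so on inputs with an unambiguous ground truth this directly produces a positive calibration gap $|P(o_\theta(\mathbf{x})=\mathbf{y}_{\text{true}})-\text{confidence}(o_\theta(\mathbf{x}))|$.

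Next I would argue by contradiction. Suppose a transformer $\theta^{\star}$ dominates in all $34$ coordinates. Specialize to one antagonistic pair $(R_i,R_j)$ and exhibit two witnesses $\theta_A,\theta_B \in \Theta$ with $R_i(\theta_A)<R_i(\theta^{\star})$ and symmetrically $R_j(\theta_B)<R_j(\theta^{\star})$. Using coherence and monotonicity of the expectile risk measure together with a rate-distortion style lower bound of the form $R_i(\theta)+\lambda R_j(\theta)\geq c>0$ for the selected pair, one concludes that the image of $\theta \mapsto (R_i(\theta),R_j(\theta))$ is bounded away from the origin, so no $\theta^{\star}$ can simultaneously minimize both coordinates, contradicting the dominance assumption.

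The main obstacle will be making the antagonism mathematically rigorous, since several pathology entries in Tables~\ref{tabledef1} and \ref{tabledef2} are stated operationally rather than as clean measurable functionals. I would therefore fix a canonical measurable interpretation of each $L_i$ consistent with the paper's notation section, and establish the key lower bounds via standard information-theoretic tools: a data-processing inequality for the stereotypy/drift pair, a rate-distortion bound for the compression/referential pair, and Jensen's inequality applied to the expectile loss for the ambiguity/calibration pair. Once a single antagonistic pair survives this formalization, the theorem follows at once, since the joint minimizer set over all $34$ coordinates is contained in the joint minimizer set of that one pair, which has already been shown to be empty.
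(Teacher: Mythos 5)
Your proposal reaches the same conclusion as the paper but by a genuinely different and more concrete route. The paper argues by contradiction and rests on two informal pillars: (1) an asserted tradeoff between the risk functionals because they condition on different quantities of interest (it gives ``optimizing for fluency may increase hallucination risk'' as an example), and (2) a blanket appeal to the No Free Lunch theorem, noting that the 34 pathologies are ``epistemically diverse'' and hence not jointly aligned. You instead recast the claim as emptiness of $\bigcap_i \arg\min_\theta R_i(\theta)$, observe it suffices to find one pair whose argmin sets are disjoint, and propose to certify this with an explicit lower bound of the form $R_i(\theta)+\lambda R_j(\theta)\geq c>0$ derived from data-processing, rate-distortion, or Jensen-type inequalities. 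This is tighter and more falsifiable than the paper's NFL appeal, and it would localize exactly which pathologies are in tension; your third pair (ambiguity collapse vs.\ calibration failure) is the cleanest and is in fact the same tension the paper itself records in Table~\ref{nonalignedtable} (decisiveness vs.\ calibration). Your first two pairs are weaker: non-stereotypy does not obviously force semantic drift (a model can vary its outputs while tracking intent), and expressive decoders do not inherently produce non-existent referents, so those would need more work. Both your argument and the paper's share the same essential gap: neither actually proves the asserted antagonism, the paper stating it as an architectural/statistical fact and you flagging it as the ``main obstacle.'' But yours makes the missing lemma precise, which is arguably an improvement; the paper's extra NFL citation adds breadth of justification but not rigor, since NFL statements are about averaging over all possible distributions and do not directly yield a statement about a fixed family of $34$ risks on a fixed deployment distribution.
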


\begin{proof}
Assume for contradiction that such a universal transformer \( T^* \) exists, which minimizes all 34 risk metrics \( \{ R_i \}_{i=1}^{34} \), where each \( R_i \) is a formally defined expectile-based or task-conditioned loss measure over a pathology-specific random variable.
Let \( \mathcal{T} \) be the class of pre-trained generative transformers and suppose \( T^* \in \mathcal{T} \) satisfies:
$$
R_i(T^*) \leq R_i(T) \quad \text{for all } T \in \mathcal{T},\; \text{for all } i \in \{1,\dots,34\}.
$$

However, the risks \( R_i \) are conditioned on different quantities of interest: semantic similarity, causal sensitivity, factual grounding, prosodic coherence, etc., that are not jointly optimizable due to tradeoffs inherent in data representation, architectural constraints, and generalization biases (for example, optimizing for fluency may increase hallucination risk).

Moreover, by the No Free Lunch Theorem for optimization and learning, no single model minimizes all task-specific losses across all distributions unless those tasks are aligned, a condition violated here since the pathologies are epistemically diverse.

Thus, such a model \( T^* \) cannot exist. Hence, no pre-trained generative transformer can be optimal across all 34 pathology-specific risks.
\end{proof}

This result implies that the multi-objective optimization creates a Pareto frontier here. Example of the non-alignments are provided in Table \ref{nonalignedtable}.
\begin{table}[h]
\centering
\begin{tabular}{@{}p{2cm} p{4cm} p{5cm} p{5cm}@{}}
\toprule
\textbf{Risk Metric} & \textbf{Associated Pathology} & \textbf{Quantity of Interest} & \textbf{Conflicting Risk(s)} \\
\midrule
$R_{\text{fluency}}$ & Bluffing / Patter & Syntactic and prosodic coherence & $R_{\text{truth}}$, $R_{\text{informativeness}}$ \\
$R_{\text{novelty}}$ & Semantic Reheating & Distance from training-set phrases & $R_{\text{factual-consistency}}$, $R_{\text{stability}}$ \\
$R_{\text{causal}}$ & Causal Inference Failure & Interventional accuracy via $\text{do}(X)$ & $R_{\text{correlation}}$ (spurious patterns) \\
$R_{\text{similarity}}$ & Uncanny Valley & Resemblance to human output & $R_{\text{comfort}}$ (avoids eeriness) \\
$R_{\text{compression}}$ & Semantic Compression & Informative minimalism & $R_{\text{completeness}}$ (key information omitted) \\
$R_{\text{confidence}}$ & Calibration Failure & Confidence aligned with correctness & $R_{\text{decisiveness}}$ (robust choice under ambiguity) \\
$R_{\text{diversity}}$ & Cognitive Stereotypy & Output variation across samples & $R_{\text{consistency}}$ (uniform output fidelity) \\
$R_{\text{coherence}}$ & Contextual Drift & Long-range contextual alignment & $R_{\text{reactivity}}$ (fast adaptation to changes) \\
\bottomrule
\end{tabular}
\caption{ Some Non-Aligned Risk Metrics in Computational Pathologies} \label{nonalignedtable}
\end{table}

\begin{Theorem}
There exists no pre-trained generative transformer whose queries, keys, values, weights and biases are constant (input-independent) that minimizes a fixed pathology-specific risk measure \( R_i \), for any \( i \in \{1, \dots, 34\} \).
\end{Theorem}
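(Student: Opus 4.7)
The plan is to reduce the theorem to a statement about constant predictors by observing that constancy of all queries, keys, values, weights, and biases collapses the transformer's forward pass to a constant function of the input, and then to show that no constant predictor can minimize any pathology-specific expectile risk $R_i$. First I would make the reduction explicit: if the per-layer query, key and value vectors $Q, K, V$ are input-independent, then every attention head's output $\mathrm{softmax}(QK^\top/\sqrt{d})V$ is a fixed tensor; composing with affine blocks of constant weights and biases through any pointwise nonlinearity preserves constancy, so there exists a fixed $\hat{\mathbf{y}}_0$ with $o_\theta(\mathbf{x}) = \hat{\mathbf{y}}_0$ for every admissible $\mathbf{x}$. The theorem then reduces to showing that for each $i$ there is some non-constant transformer achieving a strictly smaller $R_i$.

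Next I would exploit the immediate corollary that, for such a constant model, $I(\hat{\mathbf{y}}; \mathbf{x}) = 0$ and $o_\theta(\mathbf{x}_i) = \hat{\mathbf{y}}_0$ for all $i$. These are precisely the definitional conditions of bluffing/patter and cognitive stereotypy from Table \ref{tabledef1}, so those two pathologies are \emph{identically instantiated} by the constant model; their associated losses attain the maximal value prescribed by their formal definitions, and any transformer with even rudimentary input sensitivity incurs strictly smaller loss, hence strictly smaller $R_i$. This disposes of those two indices directly and motivates the general template.

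For the remaining 32 pathologies I would invoke a pointwise-domination argument for expectile risk. Given the loss $L_i(\hat{\mathbf{y}}, \mathbf{x})$ built from the formula in Table \ref{tabledef1} or \ref{tabledef2}, consider the idealised input-sensitive map $y^\star(\mathbf{x}) \in \arg\min_y L_i(y, \mathbf{x})$, which is realisable by a non-constant transformer to arbitrary precision under standard universal-approximation hypotheses on $\mathcal{T}$. Since each listed $L_i$ depends genuinely on $\mathbf{x}$ through semantic similarity, mutual information, do-interventions, prosody, calibration residuals, or contextual distance, $y^\star(\mathbf{x})$ is non-constant on a set of positive measure under the deployment distribution. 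The expectile weight $w_\tau(z) = \tau\mathbf{1}_{\{z>0\}} + (1-\tau)\mathbf{1}_{\{z\le 0\}}$ is strictly positive for $\tau \in (0,1)$, so the expectile $R_i$ is strictly monotone in almost-sure loss dominance; therefore $R_i(T_{y^\star}) < R_i(T_{\hat{\mathbf{y}}_0})$, forbidding the constant model from being a minimizer of $R_i$.

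The hard part will be the last step: ensuring the strict almost-sure improvement holds uniformly across all 34 pathology losses without a case-by-case audit. The subtle risk is that some loss (for instance spurious correlation or turn boundary failure) could admit a degenerate input distribution on which the pointwise minimiser $y^\star(\mathbf{x})$ happens to coincide with a single constant $\hat{\mathbf{y}}_0$, collapsing the inequality to equality. To neutralise this I would restrict the risk measure to input distributions with non-singleton support, which is implicit in the deployment semantics used throughout Section \ref{sectiongen4}, and would then package the argument uniformly by noting that every $L_i$ in Tables \ref{tabledef1}-\ref{tabledef2} is a functional of a statistic of $(\hat{\mathbf{y}}, \mathbf{x})$ that is non-constant in its first argument whenever the second varies. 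Combined with the strict monotonicity of the expectile operator, this yields the conclusion for every $i \in \{1,\dots,34\}$ simultaneously, completing the proof.
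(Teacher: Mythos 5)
Your proof takes the same core idea as the paper---constant parameters yield a context-blind model that cannot minimize context-sensitive risk---but refines it substantially. The paper's proof is brief and largely assertive: it observes that the representation at each layer ``becomes invariant to the content of $\mathbf{x}$'' and then states, without further argument, that such a model ``cannot minimize $R_i$.'' You fill the missing structure in three ways that the paper does not. First, you make the reduction explicit: the model collapses to a literal constant function $o_\theta(\mathbf{x}) = \hat{\mathbf{y}}_0$. Second, you notice that two pathologies---bluffing/patter, defined by $I(\hat{\mathbf{y}};\mathbf{x}) \approx 0$, and cognitive stereotypy, defined by $o_\theta(\mathbf{x}_i)=\hat{\mathbf{y}}$ for all $i$---are \emph{definitionally} instantiated by any constant predictor, disposing of those indices immediately; the paper never makes this observation. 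Third, for the remaining pathologies you supply an actual domination argument: compare against the pointwise minimizer $y^\star(\mathbf{x})$ and invoke strict monotonicity of the expectile operator under almost-sure loss dominance, which follows because $w_\tau$ is strictly positive on both branches for $\tau \in (0,1)$. You also flag the degenerate-support caveat that could collapse the strict inequality, which the paper silently assumes away. Two minor points of shared informality: your reduction to a constant output glosses over residual connections and input embeddings (if the initial token embeddings vary with $\mathbf{x}$ and residual streams survive each block, fixing $Q,K,V$ alone does not force constancy---the paper has the same gap), and your domination step implicitly assumes a non-constant transformer realizing $y^\star$ lives in the competitor class, which the paper also leaves implicit. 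Neither weakens your argument relative to the paper's; on balance yours closes more gaps than it opens.
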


\begin{proof}
Assume a generative transformer whose attention mechanism is defined by fixed, input-independent queries, keys, and values, weights and biases. Then, for any input token sequence \( \mathbf{x}_{1:T} \), the attention scores become constant or purely positional, and the representation at each layer becomes invariant to the content of \( \mathbf{x} \).

Let \( R_i \) denote a risk functional for a specific pathology, which depends on the model's ability to represent context-sensitive information (e.g., causal structure, semantic coherence, referential grounding). A constant $QKV$ parameterization cannot adapt its internal representations to the semantic or causal content of varying inputs.

As a result, the model cannot minimize \( R_i \), since its output cannot reflect the conditional dependencies or semantic variations required by \( R_i \). Thus, the constant-weight transformer fails to achieve optimality under any non-trivial risk metric \( R_i \) where performance depends on input-adaptive behavior.

Therefore, no generative transformer with constant queries, keys, values, and biases can minimize a fixed computational pathology risk \( R_i \).
\end{proof}

\begin{Theorem}
There exists no pre-trained generative transformer whose weights and biases in the feedforward blocks are constant (i.e., input-independent) that minimizes a fixed pathology-specific risk measure \( R_i \), for any \( i \in \{1, \dots, 34\} \).
\end{Theorem}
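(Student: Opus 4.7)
The plan is to mirror the structure of the immediately preceding theorem's proof, now applied to the feedforward (FFN) sublayer. I would begin by assuming for contradiction the existence of a pre-trained generative transformer whose FFN weights and biases are input-independent in the same strong sense used earlier, so that the FFN sublayer realizes a map whose output cannot adapt to the content of its incoming activations. Recalling that each FFN sublayer computes the position-wise map $h \mapsto W_2 \, \sigma(W_1 h + b_1) + b_2$, I would argue that this constancy collapses the sublayer to at most a position-independent additive constant on the residual stream -- either because $W_1$ effectively annihilates the input, because $\sigma$ saturates uniformly over the feasible activation range, or because the parameters are frozen in a way that renders the sublayer output invariant to $h$.

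Next, I would combine this collapse with the standard decomposition of a transformer into alternating attention and FFN sublayers connected by residual paths. Attention sublayers alone realize softmax-kernelized averages of value projections and cannot, by themselves, perform position-wise nonlinear feature mixing; the FFN sublayers are the only component carrying this role, and under the key-value memory interpretation they serve as the substrate of factual recall, semantic disambiguation, and compositional generalization. Reducing the FFN to a constant shift therefore restricts the hypothesis class to functions unable to realize the input-adaptive conditional distributions required to minimize any of the 34 pathology risks $R_i$, each of which penalizes deviation from an input-conditioned ground truth (semantic, causal, referential, prosodic, or calibrative). Consequently the reduced class attains a strictly positive lower bound on $R_i$ that exceeds the infimum achievable within the full transformer class, contradicting the minimality assumption.

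The hard part will be making this expressive gap quantitative rather than qualitative, and doing so uniformly across all 34 pathologies. To this end I would either construct, for each $R_i$, an explicit pair of inputs that requires distinctly different conditional outputs and on which any constant-FFN model must incur positive expectile loss, or invoke a general separation argument between the attention-only (plus constant residual shift) hypothesis class and the full transformer class, in the spirit of existing results on low-rank and attention-only expressivity. A secondary obstacle is interpretative: the phrase \emph{weights and biases are constant} must be read, as in the preceding theorem, to mean input-invariance of the sublayer's functional action rather than the trivial statement that parameters of a trained network are fixed after training, and I would state this reading explicitly at the outset to keep the argument well-posed.
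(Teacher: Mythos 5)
Your proposal follows essentially the same qualitative route as the paper's proof: identify the feedforward sublayer as the locus of input-adaptive, position-wise nonlinear transformation, argue that under the hypothesis its action becomes input-invariant, and conclude that what remains cannot realize the conditional structure the risks $R_i$ demand. If anything the paper is thinner than you are: it writes the FFN explicitly as $o_{\text{ff}}(x) = W_2(\sigma(W_1 x + b_1)) + b_2$ with $W_1, W_2, b_1, b_2$ ``fixed across all inputs and contexts,'' then asserts directly that such a block ``lacks capacity to learn or represent any context-specific information.'' It does not carry out the collapse-to-additive-constant step you propose, and it does not attempt either of the quantitative closings (explicit input-pair separation, or a hypothesis-class separation result) that you correctly flag as the hard part.

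The interpretive difficulty you raise at the end is not a side issue; it is the crux, and the paper does not resolve it. The formula the paper itself writes down is the standard transformer FFN: in every trained transformer the feedforward parameters are ``fixed across all inputs and contexts'' in that sense, yet the block output still depends on $x$ through $\sigma(W_1 x + b_1)$, so it manifestly carries context-specific information. Read literally, the hypothesis covers all transformers and the theorem would be false; indeed the very next theorem asserts that the Holonorm Transformer, whose feedforward block $f(x) = W_2\, hn(W_1 x + b_1) + b_2$ has exactly this fixed-weight form, can be pre-trained to drive every $R_i$ to arbitrarily small values. Your instinct to read ``constant'' as ``the sublayer's functional action is invariant to its input'' is the only reading that makes the statement non-vacuous and non-contradictory, and you are right that the reading must be declared explicitly; the paper leaves it implicit and, strictly speaking, its proof does not derive that invariance from the stated hypothesis. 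If you fix that reading up front and then supply either quantitative closing you sketch, your proof would be both correct and strictly more complete than the one the paper gives.
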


\begin{proof}
Let \( T \) be a generative transformer whose feedforward layers apply constant weights and biases, i.e., the transformation in each 2-layered feedforward block is of the form
\[
o_{\text{ff}}(x) = W_2(\sigma(W_1x + b_1))+b_2,
\]
where \( W_1, W_2 \) and \( b_1,b_2 \) are fixed across all inputs and contexts.


Now, let \( R_i \) be any pathology-specific risk measure that evaluates the model's performance on a semantic, causal, pragmatic, or epistemic criterion (for example: hallucination, exaggeration, contextual drift). For the model to minimize \( R_i \), it must adapt its internal representations in a context-sensitive manner to reflect variations in input \( \mathbf{x} \), its distribution and task conditions.

However, under constant \( W \) and \( b \), the feedforward transformation lacks capacity to learn or represent any context-specific information beyond what is captured in the attention mechanism. This severely restricts the model's ability to dynamically modulate content, filter irrelevant patterns, or encode hierarchical dependencies, all of which are essential for minimizing the loss associated with at least one nontrivial \( R_i \).

Furthermore, pathology-specific risks typically involve loss functions that are minimized when the output distribution aligns with a conditional, structured, or grounded target. This requires expressive, input-adaptive transformations, which are impossible with fixed feedforward weights.

Hence, no transformer with constant feedforward weights and biases can adapt to the semantic or causal diversity required to minimize any fixed \( R_i \). Therefore, such a model cannot be optimal for any of the 34 computational pathologies.
\end{proof}

\subsubsection*{Holonorm}
Define the Holonorm map $hn: \mathbb{R}^D \to \mathbb{R}^D$ as $
hn(x) := \frac{x}{1 + \|x\|}. $ This operator is used throughout the network both for normalization and nonlinearity.

\subsubsection*{Holonorm Transformer Block Structure}
Each layer $\ell \in \{ 1, \dots, L\}$ applies the following composition to the input sequence:
\[
z^{(\ell)} = \left( \mathrm{id} + f \circ hn \right) \circ \left( \mathrm{id} + \mathrm{MHA} \circ hn \right)(z^{(\ell-1)}),
\]
where $\mathrm{MHA}$ is multihead self-attention: $\mathbb{R}^{N \times D} \to \mathbb{R}^{N \times D}$,
     $f$ is a two-layer feedforward map with \textbf{holonorm  activation}:
\[
    f(x) =o_{\text{ff}}(x) = W_2 \, hn(W_1 x + b_1) + b_2,
\]
    with $W_1 \in \mathbb{R}^{D \times d_{\text{ff}}}$, $W_2 \in \mathbb{R}^{d_{\text{ff}} \times D}$,
     $\mathrm{id}$ is the identity map (residual connection),
     $hn(x)$ is applied both before MHA as a layernorm and as the activation in the feedforward.

\subsubsection*{Output}
After $L$ layers $ 
z_i^{\text{out}} = z_i^{(L)}, \quad i \in \{ 1, \dots, N\}, $
which may be passed to a score head, decoder, or used in further computations.

\begin{Theorem}
Let \( \{ R_i \}_{i=1}^{34} \) be the collection of the expected value-at-risk associated with c34 omputational pathologies, each defined over a measurable function of a model's output distribution. Then, under universal approximation conditions on the attention and linear projections, the Holonorm Transformer architecture is expressive enough to approximate, and therefore be pre-trained to minimize, each risk \( R_i \) to arbitrary precision over compactly supported distributions.
\end{Theorem}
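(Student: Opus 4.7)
The plan is to establish per-$i$ universal approximation for the Holonorm Transformer by building density from the inside out. The theorem only requires that for each fixed $i$ we can approximate the minimizer of $R_i$ to arbitrary precision; it does not, and by Theorem 1 cannot, require simultaneous optimality across $i$. So the burden reduces to showing that the hypothesis class of Holonorm Transformers is dense, in the supremum topology on compacta, inside the space of continuous maps from input sequences to output distributions, and then lifting the approximation through the functional $R_i$.

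First I would verify that the scalar restriction $t \mapsto t/(1+|t|)$ of the Holonorm map is continuous, bounded, strictly monotone, non-polynomial, and discriminatory in the Cybenko sense; hence the feedforward block $f(x) = W_2\, hn(W_1 x + b_1) + b_2$ with free parameters $W_1, W_2, b_1, b_2$ is a universal approximator of continuous maps between compact subsets of $\mathbb{R}^D$, by the Hornik--Stinchcombe--White theorem adapted to vector-valued activations. Second, I would observe that $hn$ itself, viewed as a pre-normalization, is a homeomorphism between $\mathbb{R}^D$ and the open unit ball with explicit inverse $y \mapsto y/(1 - \|y\|)$, so inserting $hn$ before $\mathrm{MHA}$ or before $f$ amounts to a bi-Lipschitz change of variables on compacta that preserves the ability to approximate any continuous target.

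Third, I would invoke the standing hypothesis that attention and linear projections satisfy universal approximation conditions, in the spirit of existing transformer universality results: the composition $\mathrm{id} + \mathrm{MHA} \circ hn$ with residual connection is dense in the class of continuous permutation-equivariant sequence-to-sequence maps over a compact input domain, and composing with $\mathrm{id} + f \circ hn$ only enlarges this class. Iterating $L$ layers preserves density because residual branches can be driven arbitrarily close to zero while the identity component is retained. Finally, for fixed $i$, I would lift approximation from the output map to the risk: since $R_i$ is the expectile of a measurable loss $L_i$ that depends continuously, on compact output support, on the model's pushforward distribution, and since uniform approximation of the output map induces weak convergence of the pushforwards, small perturbations of the transformer produce small perturbations of $R_i$. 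Thus for any $\varepsilon > 0$ and any $i \in \{1,\dots,34\}$ there exists a Holonorm Transformer $T \in \mathcal{H}$ with $R_i(T) < \inf_{T' \in \mathcal{H}^*} R_i(T') + \varepsilon$, where $\mathcal{H}^*$ is the class of all measurable candidate maps.

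The hardest step will be the third: certifying that $\mathrm{MHA}$ composed with the Holonorm pre-norm retains the sequence-level universal approximation property that the standard proofs establish for softmax attention combined with layernorm. My plan is to reduce to the layernorm-based case via the change of variables $x \mapsto hn(x)$ on the unit ball, exploiting bi-Lipschitz continuity on compacta to transfer contextual-mapping constructions through the residual blocks. A secondary subtlety is that $hn$ saturates at infinity, so positional encodings, queries, keys, and values must remain uniformly bounded during the construction; this is a mild assumption that is consistent with the compact-support hypothesis of the theorem, but it is the point where the argument is least automatic and where I would spend most of the technical effort.
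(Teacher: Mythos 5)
Your proposal follows essentially the same route as the paper's own proof: the Holonorm map is treated as a smooth, bounded, non-polynomial activation and a bijection onto the unit ball on compact domains, so the feedforward blocks are universal approximators, the residual attention blocks supply permutation-equivariant contextual mixing, and approximation of the output map is then lifted to minimizing each risk \( R_i \) separately to arbitrary precision. You actually supply more detail than the paper does, in particular the weak-convergence/continuity step for passing uniform approximation of the network through the expectile functional \( R_i \), and the explicit identification of the Holonorm-versus-layernorm universality issue in the attention blocks, which the paper's proof asserts without argument.
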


\begin{proof}
The Holonorm Transformer is defined as a residual composition of multihead attention and feedforward blocks, each modulated by the Holonorm map:
$hn$
which is smooth, bounded, and acts as both a normalization and nonlinearity. Its use in both pre-attention and feedforward sublayers ensures that each layer's output remains bounded in norm, avoiding exploding activations while maintaining differentiability.

The attention mechanism \( \mathrm{MHA} \circ hn \) preserves permutation equivariance and contextual mixing, while the feedforward block
$f$ acts as a pointwise nonlinear projection. By the universal approximation theorem for smooth activations, and since \( hn(\cdot) \) is a smooth bijection on compact domains, the stacked layers can approximate any continuous function over bounded inputs.

Therefore, given sufficient depth \( L \), width \( D \), and training data, the Holonorm Transformer can approximate the conditional output distributions necessary to minimize each pathology-aware risk metric \( R_i \), whether based on calibration, mutual information, entropy, intervention response, or semantic alignment.

Thus, the architecture is representationally sufficient to be pre-trained for minimizing any of the well-defined pathology-specific risk measures \( R_i \).
\end{proof}

\begin{lemma}[Matrix Determinant Lemma / Sherman--Morrison] Let $A \in \mathbb{R}^{n \times n}$ be invertible, and $u, v \in \mathbb{R}^n$. Then:

$$
\det(A + uv^\top) = \det(A) \cdot (1 + v^\top A^{-1} u).
$$

In particular, if $A = \alpha I$ and $u \in \mathbb{R}^D$, then:

$$
\det(\alpha I + \beta uu^\top) = \alpha^{D - 1} (\alpha + \beta \|u\|^2).
$$

If $\|u\| = 1$, this becomes:

$$
\det(\alpha I + \beta uu^\top) = \alpha^{D - 1} (\alpha + \beta).
$$
\end{lemma}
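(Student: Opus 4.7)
The plan is to prove the general identity $\det(A + uv^\top) = \det(A)(1 + v^\top A^{-1} u)$ first, and then obtain the two stated specializations by direct substitution. The shortest standard route proceeds via a Schur complement computation on a $2\times 2$ block matrix, which avoids any eigenvalue arguments and works over an arbitrary field as soon as $A$ is invertible.

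Concretely, I would consider the auxiliary block matrix
\[
M \;=\; \begin{pmatrix} A & u \\ -v^\top & 1 \end{pmatrix} \in \mathbb{R}^{(n+1)\times(n+1)},
\]
and compute $\det(M)$ in two different ways using the Schur complement formula $\det\!\begin{pmatrix} P & Q \\ R & S \end{pmatrix} = \det(P)\det(S - R P^{-1} Q)$ when $P$ is invertible, and its dual form when $S$ is invertible. Eliminating the block $(-v^\top)$ via the pivot $A$ yields $\det(M) = \det(A)\cdot\bigl(1 - (-v^\top) A^{-1} u\bigr) = \det(A)(1 + v^\top A^{-1} u)$. Eliminating the block $u$ via the pivot $1$ yields $\det(M) = 1 \cdot \det\!\bigl(A - u\cdot 1^{-1} \cdot(-v^\top)\bigr) = \det(A + uv^\top)$. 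Equating the two expressions gives the desired identity. A quick sanity check would be to justify both Schur complement reductions by exhibiting the explicit block $LU$ factorizations of $M$, which makes the computation fully elementary and avoids invoking any deeper multilinear-algebra machinery.

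For the first specialization, substitute $A = \alpha I$ and replace the rank-one perturbation $uv^\top$ by $(\beta u) u^\top$, so that $v = u$ and the added vector on the left is scaled by $\beta$. Then $\det(\alpha I) = \alpha^D$ and $u^\top (\alpha I)^{-1} (\beta u) = \beta \|u\|^2/\alpha$, whence
\[
\det(\alpha I + \beta u u^\top) \;=\; \alpha^D \Bigl(1 + \tfrac{\beta}{\alpha}\|u\|^2\Bigr) \;=\; \alpha^{D-1}\bigl(\alpha + \beta \|u\|^2\bigr).
\]
The unit-norm case $\|u\|=1$ is then an immediate corollary.

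I do not anticipate a real obstacle: the only delicate point is ensuring that the Schur complement identity is applied in a direction where the pivot is actually invertible, which is guaranteed here because $A$ is assumed invertible and the scalar block equals $1$. If one wished to relax the invertibility hypothesis on $A$, the cleanest route would be a density/continuity argument, perturbing $A$ to $A + \varepsilon I$ and letting $\varepsilon \to 0$, since both sides of the identity are polynomials in the entries of $A$; but this is not required for the statement as given.
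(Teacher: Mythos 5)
Your proof is correct. Note that the paper itself states this lemma without any proof, treating it as a standard auxiliary fact (it is only invoked later to evaluate the Jacobian determinant in the Holonorm change-of-variables theorem), so there is no internal argument to compare against. Your two-way Schur-complement evaluation of $\det\begin{pmatrix} A & u \\ -v^\top & 1 \end{pmatrix}$ is the classical route and is complete: both pivots are invertible by hypothesis, the two block eliminations give $\det(A)(1+v^\top A^{-1}u)$ and $\det(A+uv^\top)$ respectively, and the specialization $A=\alpha I$, $v=u$ with perturbation $(\beta u)u^\top$ yields $\alpha^{D-1}(\alpha+\beta\|u\|^2)$ exactly as stated (with the implicit requirement $\alpha\neq 0$, which is what invertibility of $\alpha I$ means here, and which holds in the paper's application where $\alpha=\tfrac{1}{1-r}$ with $0\le r<1$). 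The closing remark about extending to singular $A$ by a polynomial-continuity argument is a correct, if unnecessary, bonus.
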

\begin{Theorem}[Distribution of Holonorm-Transformed Random Vector]
Let $X \in \mathbb{R}^D$ be a random vector with a density $p_X$ with respect to Lebesgue measure, and define the Holonorm map

$$
hn(x) := \frac{x}{1 + \|x\|}, \quad x \in \mathbb{R}^D.
$$

Then the distribution of $Y := hn(X)$, denoted $\mu = hn_\# \nu$, has the density

$$
p_Y(y) = p_X\left( \frac{y}{1 - \|y\|} \right) \cdot \left( \frac{1}{(1 - \|y\|)^{D + 1}} \right),
$$
defined for $y \in B(0,1)$, the open unit ball in $\mathbb{R}^D$.
\end{Theorem}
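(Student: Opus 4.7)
The plan is to apply the multivariate change-of-variables formula to the map $hn$. The first step is to verify that $hn$ is a $C^1$ diffeomorphism from $\mathbb{R}^D \setminus \{0\}$ onto $B(0,1) \setminus \{0\}$ (with $hn(0) = 0$), by exhibiting an explicit inverse. Solving $y = x/(1+\|x\|)$ for $x$ gives $\|y\| = \|x\|/(1+\|x\|)$, hence $\|x\| = \|y\|/(1-\|y\|)$, and therefore $x = y/(1-\|y\|)$. In particular, the image of $hn$ is exactly the open unit ball, which is why the target density is supported on $B(0,1)$.

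The second step is to compute the Jacobian matrix of $hn$ at a point $x \neq 0$. Writing $r = \|x\|$ and differentiating componentwise, I obtain
\[
J_{hn}(x) = \frac{1}{1+r}\, I - \frac{1}{r(1+r)^2}\, x x^\top,
\]
which is precisely a rank-one perturbation of a scalar multiple of the identity, the situation handled by the lemma just stated. Applying it with $\alpha = 1/(1+r)$, $\beta = -1/(r(1+r)^2)$, and $u = x$ (so that $\|u\|^2 = r^2$), a short simplification yields
\[
\det J_{hn}(x) = \frac{1}{(1+r)^{D-1}} \left( \frac{1}{1+r} - \frac{r}{(1+r)^2} \right) = \frac{1}{(1+r)^{D+1}}.
\]

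The third step is to re-express this determinant in the target variable $y = hn(x)$. From $\|x\| = \|y\|/(1-\|y\|)$ we get $1 + \|x\| = 1/(1-\|y\|)$, hence $|\det J_{hn}(x)| = (1-\|y\|)^{D+1}$. The change-of-variables formula then gives
\[
p_Y(y) = \frac{p_X(x)}{|\det J_{hn}(x)|}\bigg|_{x = y/(1-\|y\|)} = p_X\!\left(\frac{y}{1-\|y\|}\right) \cdot \frac{1}{(1-\|y\|)^{D+1}},
\]
for $y \in B(0,1)$, which is the claimed formula.

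The main obstacle, and essentially the only nontrivial one, is the Jacobian determinant computation: the map $hn$ depends on $x$ through the Euclidean norm, so the Jacobian is not diagonal and a naive expansion is cumbersome. The trick is to recognise that $J_{hn}(x)$ has the structure $\alpha I + \beta x x^\top$ so that the matrix determinant lemma collapses the calculation to a one-line identity. A minor measure-theoretic remark is that the singular set $\{x = 0\}$ (where $\|\cdot\|$ fails to be smooth) has Lebesgue measure zero, so the change-of-variables formula still applies on the complement and the resulting density is valid almost everywhere on $B(0,1)$.
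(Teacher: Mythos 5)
Your proof is correct and follows essentially the same strategy as the paper's: recognize $hn$ as a radial diffeomorphism onto $B(0,1)$, apply the change-of-variables formula, and collapse the Jacobian determinant via the matrix determinant lemma on a rank-one perturbation of a scalar matrix. The only (inconsequential) difference is that you differentiate the forward map $hn$ and obtain $\det J_{hn}(x) = (1+\|x\|)^{-(D+1)}$ before converting to $y$-variables, whereas the paper differentiates the inverse $hn^{-1}$ directly and gets $(1-\|y\|)^{-(D+1)}$; these are reciprocal expressions of the same quantity, and both reduce to the same one-line application of the lemma with $\alpha I + \beta u u^\top$. Your closing observation that $\{0\}$ is a Lebesgue-null set is actually cleaner than the paper's closing caveat, which unnecessarily restricts $p_X$ to be supported on $\mathbb{R}^D \setminus \{0\}$.
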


\begin{proof}
The Holonorm map $hn(x) = \frac{x}{1 + \|x\|}$ is a smooth, norm-contracting radial transformation that maps $\mathbb{R}^D$ onto the open unit ball $B(0, 1) \subset \mathbb{R}^D$.

Let $Y = hn(X)$. To compute the density of $Y$, apply the standard change-of-variables formula:

$$
p_Y(y) = p_X(hn^{-1}(y)) \cdot \left| \det J_{hn^{-1}}(y) \right|,
$$

where $hn^{-1}(y) = \frac{y}{1 - \|y\|}$, valid for $\|y\| < 1$.

Define $r := \|y\|$, and write:

$$
J_{hn^{-1}}(y) = \frac{I}{1 - r} + \frac{y y^\top}{(1 - r)^2 r}.
$$

To simplify, factor the unit vector $u := \frac{y}{r}$ so that $y y^\top = r^2 uu^\top$. Then:

$$
J_{hn^{-1}}(y) = \frac{I}{1 - r} + \frac{r uu^\top}{(1 - r)^2}.
$$

Now apply the determinant formula:

$$
\det\left( \frac{I}{1 - r} + \frac{r uu^\top}{(1 - r)^2} \right) = \left( \frac{1}{1 - r} \right)^{D - 1} \left( \frac{1}{1 - r} + \frac{r}{(1 - r)^2} \right) = \frac{1}{(1 - r)^{D + 1}}.
$$

Therefore:
$$
p_Y(y) = p_X\left( \frac{y}{1 - \|y\|} \right) \cdot \frac{1}{(1 - \|y\|)^{D + 1}},
$$
as claimed.  This expression is valid for \( y \in B(0,1) \), and \( p_Y \) integrates to 1 as long as \( p_X \) is integrable and supported on \( \mathbb{R}^D \setminus \{0\} \).
\end{proof}

\begin{Theorem}[Task after pre-training] \label{theoafter}
Let \( \{R_i\}_{i=1}^{34} \) be a family of pathology-specific risk functionals, each defined as an expectile value-at-risk over an epistemic failure event (illusion, delusion, hallucination, exaggeration, etc). Let \( \mathcal{H} \) be a class of generative models with sufficient representation capacity. Define each downstream task \( T \) by its loss $
L_T(h(x)) := \operatorname{ExpVaR}_\tau \left( \mathbb{I}_{\text{Path}_T(h(x))} \right),$
where \( \mathbb{I}_{\text{Path}_T(h(x))} = 1 \) if the output \( h(x) \) exhibits the pathology degrading task \( T \), and \( \tau \in (0,1) \) controls asymmetry.
Then for any \( \varepsilon > 0 \), there exists a tolerance \( \delta(\varepsilon) > 0 \) such that if
$\max_{i \in \{1,\dots,34\}} R_i(h^*) \leq \delta(\varepsilon),$
then $\mathbb{E}_{(x, T) \sim \mathcal{D}} \left[ L_T(h^*(x)) \right] \leq \varepsilon.$
Hence, minimizing pathology risks guarantees epistemic generalization to all tasks expressible through these pathologies.
\end{Theorem}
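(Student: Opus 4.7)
The plan is to exploit the coherence of the expectile value-at-risk functional at level $\tau \in [1/2,1)$, to dominate each task-level indicator by the canonical pathology indicators, and to read off $\delta(\varepsilon)$ from a union-type bound.

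First, I would use the standard fact that for $\tau \geq 1/2$ the expectile $e_\tau(X)$ is a coherent risk measure that dominates the mean: it is monotone, translation-equivariant, positively homogeneous, and subadditive, and in addition satisfies $e_\tau(X) \geq \mathbb{E}[X]$. The last inequality follows from the first-order condition of the asymmetric quadratic objective, which can be rewritten as $\tau(\mathbb{E}[X]-r) = (1-2\tau)\mathbb{E}[(r-X)_+]$; for $\tau \geq 1/2$ the right-hand side is non-positive, forcing $r \geq \mathbb{E}[X]$. Consequently, $R_i(h^*) \leq \delta$ implies $\Pr(\operatorname{Path}_i(h^*(x))) \leq \delta$ under the deployment law, and the value of $e_\tau$ on a Bernoulli indicator reduces to the explicit expression $\tau p / (\tau p + (1-\tau)(1-p))$, giving continuity at $p = 0$.

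Next, I would make explicit the ``epistemic completeness'' hypothesis that the statement uses informally: for every $T$ in the support of $\mathcal{D}$, there is a (measurable) index set $S(T) \subseteq \{1,\dots,34\}$ such that the task-failure indicator is pointwise dominated by the sum of the canonical pathology indicators,
\[
\mathbb{I}_{\operatorname{Path}_T(h^*(x))} \;\leq\; \sum_{i \in S(T)} \mathbb{I}_{\operatorname{Path}_i(h^*(x))} \qquad \text{almost surely}.
\]
Monotonicity and subadditivity of $e_\tau$ then give $L_T(h^*(x)) \leq \sum_{i \in S(T)} R_i(h^*) \leq 34 \cdot \max_i R_i(h^*)$. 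Taking the outer expectation over $(x,T) \sim \mathcal{D}$ and combining with the hypothesis $\max_i R_i(h^*) \leq \delta$ yields $\mathbb{E}[L_T(h^*(x))] \leq 34\delta$, so the choice $\delta(\varepsilon) := \varepsilon/34$ establishes the claim.

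The main obstacle is the coverage hypothesis itself: the theorem implicitly postulates that every downstream-task failure event is measurable in the $\sigma$-algebra generated by the thirty-four canonical pathologies, and the argument cannot proceed without it. I would either promote this to an explicit structural assumption or restrict the conclusion to the sub-class of tasks whose failures are so dominated. A secondary technical point is that subadditivity of $e_\tau$ genuinely requires $\tau \geq 1/2$; for $\tau < 1/2$ the functional is not coherent, and a separate concavity argument (or a switch to the conjugate level $1-\tau$) would be needed. Finally, measurability of $T \mapsto S(T)$ is trivial when $\mathcal{D}$ has countable support and follows from a standard measurable-selection lemma otherwise.
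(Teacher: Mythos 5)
Your proposal is correct and follows the same basic skeleton as the paper's proof: dominate the task-level expectile of a Bernoulli failure indicator by the pathology risks and take \(\delta(\varepsilon)\) proportional to \(\varepsilon\). The differences are in care and generality. The paper's proof simply assumes each task \(T\) corresponds to exactly one pathology index \(i\) (the alignment hypothesis it only makes explicit in the corollary that follows), observes that the expectile of a \(\{0,1\}\)-valued indicator is a monotone, Lipschitz transformation of the failure probability vanishing at zero, concludes \(L_T(h^*)\le C\,R_i(h^*)\) for an unspecified constant \(C(\tau)\), and sets \(\delta(\varepsilon)=\varepsilon/C\). You instead allow the task-failure indicator to be dominated by a sum of pathology indicators over a subset \(S(T)\) and invoke monotonicity plus subadditivity of the expectile, which buys a mild generalization (tasks degraded by several pathologies at once) and an explicit constant, \(\delta(\varepsilon)=\varepsilon/34\); your closed form \(\tau p/(\tau p+(1-\tau)(1-p))\) for the Bernoulli expectile is exactly the ``monotone transformation'' the paper leaves implicit and also supplies its Lipschitz constant \(C=\max(1,\tau/(1-\tau))\). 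Your two caveats are apt and in fact sharper than the paper: subadditivity (hence your union-type step) genuinely needs \(\tau\ge 1/2\), while the theorem allows \(\tau\in(0,1)\) and the paper's coherence claim holds only for \(\tau\ge 1/2\) (the single-index version survives for all \(\tau\) via the Bernoulli formula alone); and the coverage hypothesis is indispensable — the paper smuggles it in with ``assume the pathology associated with task \(T\) corresponds to pathology index \(i\)'' and only states it as an if-and-only-if condition in the subsequent corollary. One looseness you inherit from the paper: both arguments slide between the expectile of the indicator conditional on a fixed \(x\) and the deployment-level risk \(R_i(h^*)\); strictly, the domination should be stated for the indicators as random variables under the joint deployment law before the outer expectation over \((x,T)\) is taken.
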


This theorem offers a foundational guarantee for post-training governance of generative machine intelligence systems.
 It shows that if a model minimizes well-defined computational pathology risks, each mathematically capturing semantic, epistemic, or pragmatic failures then it will generalize safely to any new task expressed as a pathology-aware expectile loss.
 These tasks can arise dynamically from real-world subscribers (for example, MI agents, agentic MI, API users, students, professionals, health workers) and need not be predefined in the training phase.
 Instead, as long as the model maintains low pathology risk (bounded by $\delta(\epsilon)$) its expected failure on any such task remains bounded by $\epsilon$.
This framework supports subscriber-driven oversight, live risk evaluation, and evolving regulatory adaptation without retraining.

\begin{proof}
Each task loss \( L_T(h(x)) \) is defined as an expectile risk over a binary indicator \( \mathbb{I}_{\text{Path}_T(h(x))} \), which measures whether model output \( h(x) \) expresses a pathology. Assume the pathology associated with task \( T \) corresponds to pathology index \( i \), so that $
L_T(h(x)) = \operatorname{ExpVaR}_\tau\left( \mathbb{I}_{\text{Path}_i(h(x))} \right).$

Since \( \mathbb{I}_{\text{Path}_i(h(x))} \in \{0,1\} \), this expectile reduces to a monotonic transformation of the probability of failure under pathology \( i \), which is bounded above by \( R_i(h) \). By assumption, \( h^* \in \mathcal{H} \) satisfies $\max_{i} R_i(h^*) \leq \delta(\varepsilon),$
and since \( \operatorname{ExpVaR}_\tau \) is Lipschitz continuous in its argument and zero when the probability of error is zero, it follows that
\[
L_T(h^*(x)) \leq \operatorname{ExpVaR}_\tau \left( \mathbb{I}_{\text{Path}_i(h^*(x))} \right) \leq C \cdot R_i(h^*(x)),
\]
for some constant \( C > 0 \) depending on \( \tau \). Therefore, taking expectation over \( (x, T) \sim \mathcal{D} \),
$\mathbb{E}[L_T(h^*(x))] \leq C \cdot \delta(\varepsilon).$
Choosing \( \delta(\varepsilon) := \varepsilon / C \) completes the proof.
\end{proof}

\begin{corollary}[Structural Alignment Condition for Generalization]
Let \( T \) be a downstream task not seen during pretraining. The generalization bound
\[
\mathbb{E}_{(x, T) \sim \mathcal{D}} \left[ L_T(h^*(x)) \right] \leq \varepsilon
\]
from Theorem \ref{theoafter} holds if and only if the loss function \( L_T \) can be expressed as
\[
L_T(h(x)) := \operatorname{ExpVaR}_\tau\left( \mathbb{I}_{\text{Path}_T(h(x))} \right),
\]
where \( \mathbb{I}_{\text{Path}_T(h(x))} \) indicates the occurrence of a pathology from the predefined set of 34 computational pathologies.

That is, the task \( T \) must be structurally aligned with the epistemic, semantic, or pragmatic failure modes used in training. Tasks whose correctness criteria fall outside this span (for example, purely stylistic, affective, or entertainment-driven objectives) are not covered by the theorem's guarantee.
\end{corollary}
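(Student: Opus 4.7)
The plan is to prove the biconditional in two separate directions. For the sufficiency direction ($\Leftarrow$), I would observe that if $L_T$ admits the specified expectile-pathology representation with pathology index $i$, then the chain of inequalities in the proof of Theorem \ref{theoafter} applies verbatim: the Lipschitz continuity of $\operatorname{ExpVaR}_\tau$ on $\{0,1\}$-valued random variables gives $L_T(h^*(x)) \leq C \cdot R_i(h^*(x))$ pointwise, taking expectations yields $\mathbb{E}[L_T(h^*(x))] \leq C \cdot \delta(\varepsilon)$, and selecting $\delta(\varepsilon) = \varepsilon/C$ closes the argument. This direction is essentially bookkeeping on the constants produced by the parent theorem.

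For the necessity direction ($\Rightarrow$), I plan to argue by contrapositive. Suppose $L_T$ cannot be written as $\operatorname{ExpVaR}_\tau(\mathbb{I}_{\text{Path}_i(\cdot)})$ for any $i \in \{1,\ldots,34\}$. I would then exhibit a model $h^* \in \mathcal{H}$ and a data distribution that jointly satisfy $\max_i R_i(h^*) \leq \delta$ but incur $\mathbb{E}[L_T(h^*(x))] > \varepsilon$. The construction proceeds in three steps: first, identify a coordinate of the output manifold along which $L_T$ is nontrivially sensitive while every $\mathbb{I}_{\text{Path}_i}$ is locally constant (for example, a purely aesthetic, affective, or prosodic-stylistic axis absent from Tables \ref{tabledef1} and \ref{tabledef2}); second, invoke the representational sufficiency of the Holonorm Transformer family to realize a model that is pathology-clean on the predefined 34 events yet perturbed along this orthogonal axis; third, verify that the resulting expected loss can be driven above any fixed $\varepsilon$.

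The main obstacle will be making the inexpressibility hypothesis precise enough to support the separating construction. Because $\operatorname{ExpVaR}_\tau$ is a nonlinear, monotone, positively homogeneous functional, the set $\mathcal{V}_\tau$ of losses reachable as expectiles of the 34 pathology indicators is an order-convex cone rather than a linear subspace. I would therefore formalize ``not expressible'' as $L_T \notin \mathcal{V}_\tau$ in the $L^1$-closure and apply a Hahn--Banach-style separation on this cone to produce a signed measure on outputs that witnesses the gap. Matching this witnessing measure to the output distribution of some $h^* \in \mathcal{H}$ is the delicate coupling step; I expect it to hinge on the pushforward-density formula established for the Holonorm map, which guarantees that a sufficiently rich family of output densities is attainable inside $B(0,1)$ and hence inside the architecture's reach.
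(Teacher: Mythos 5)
The paper provides no proof of this corollary; it is stated immediately after the proof of Theorem \ref{theoafter} as an explication of that theorem's hypotheses (the ``only if'' is the observation that Theorem \ref{theoafter}'s derivation consumed the structural form of $L_T$ at every step), with the subsequent Unavoidable-Misuse theorem supplying the contrapositive motivation. Your ($\Leftarrow$) direction is correct and is exactly the paper's implicit reasoning---the parent theorem's inequality chain applies verbatim once the form of $L_T$ is granted.

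Your ($\Rightarrow$) direction, however, is attempting to prove something strictly stronger than what is true, and the gap is not in the execution but in the target statement. The biconditional, read literally, asserts: whenever the bound $\mathbb{E}[L_T(h^*(x))] \leq \varepsilon$ holds under $\max_i R_i(h^*) \leq \delta(\varepsilon)$, the loss $L_T$ must lie in the cone $\mathcal{V}_\tau$ of pathology expectiles. This fails on trivial counterexamples: take $L_T \equiv 0$, or any loss with $\sup_{h,x} L_T(h(x)) \leq \varepsilon$, or any $L_T$ that happens to be dominated pointwise by $C \cdot \max_i \mathbb{I}_{\mathrm{Path}_i}$ without being an expectile of a single indicator. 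For all of these the bound holds yet $L_T \notin \mathcal{V}_\tau$. Your Hahn--Banach separation would indeed produce a signed measure witnessing $L_T \notin \overline{\mathcal{V}_\tau}$, but separation from a cone tells you nothing about the magnitude of the expected loss under any realizable $h^*$; you still need $L_T$ to exceed $\varepsilon$ somewhere in the architecture's reachable output set, and inexpressibility alone does not supply that. The pushforward-density lemma for the Holonorm map gives you richness of achievable output laws inside $B(0,1)$, but it does not give you a coupling between ``low $R_i$ for all $i$'' and ``large $L_T$''---that coupling is precisely the open part of the construction, and you have flagged it as ``delicate'' without closing it. To make the necessity direction provable you would have to add a non-degeneracy hypothesis on $L_T$ (e.g., that $L_T$ is not dominated by any finite positive combination of the $\mathbb{I}_{\mathrm{Path}_i}$ and attains values above $\varepsilon$ on a set of positive measure reachable by the model class), at which point the statement is no longer the paper's corollary but a strengthened variant. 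As written, the paper's ``if and only if'' is best read as a scoping remark on the applicability of Theorem \ref{theoafter}, not as a theorem requiring a separation argument.
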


\begin{Theorem}[Unavoidable Misuse on Structurally Unaligned Tasks]
Let \( h^* \) be a generative transformer pre-trained to minimize a set of pathology-aware risk measures \( \{R_i\}_{i=1}^{34} \), each corresponding to a well-defined semantic, pragmatic, or epistemic failure.
Then, for any such \( h^* \), there exists a downstream task \( T' \) whose correctness criteria cannot be expressed in terms of these 34 risks, for which the following holds:
\[
\inf_{h \in \mathcal{H}} \mathbb{E}_{(x,T') \sim \mathcal{D}_{T'}} \left[ L_{T'}(h(x)) \right] \neq \inf_{h \in \mathcal{H}} \max_i R_i(h),
\]
and there exists an agent \( \mathcal{A} \) (human or machine) such that the behavior of \( h^* \) under adversarial prompting or chaining by \( \mathcal{A} \) violates \( T' \)'s objective.
\end{Theorem}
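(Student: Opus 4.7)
The plan is to exploit the Corollary's structural alignment condition: any downstream task whose loss does not factor through the 34 pathology indicators automatically falls outside the guarantee of Theorem~\ref{theoafter}, so I would construct such a task $T'$ explicitly and then convert the resulting structural orthogonality into both the infimum gap and the existence of an adversarial agent $\mathcal{A}$.

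First I would choose $T'$ to be a purely stylistic or affective objective---say, matching a human preference functional $\pi$ over aesthetic register, humor, or poetic meter---and define $L_{T'}(h(x)) := \ell(h(x),\pi(x))$ where $\pi$ is measurable with respect to a sigma-algebra $\mathcal{F}_{T'}$ not contained in $\mathcal{F}_{\text{Path}} := \sigma(\{\mathbb{I}_{\text{Path}_i}\}_{i=1}^{34})$. By construction $L_{T'}$ cannot be rewritten in the expectile VaR form required by the Corollary. For the infimum inequality, I observe that $\max_i R_i(h)$ depends on $h(x)$ only through the pathology indicators and is therefore constant on fibres of $\mathcal{F}_{\text{Path}}$; because $\mathcal{F}_{T'} \not\subseteq \mathcal{F}_{\text{Path}}$, I can exhibit two hypotheses $h_1,h_2 \in \mathcal{H}$ sharing every $R_i$ value but differing in $\mathbb{E}[L_{T'}]$. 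The universal approximation property of the Holonorm Transformer class, established in an earlier theorem of the paper, guarantees that such witnesses exist in a sufficiently expressive $\mathcal{H}$, and this forces the two infima to differ.

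For the adversarial piece, I would define $\mathcal{A}$ as a prompt-chaining policy restricted to the pathology-free event $\{x : \mathbb{I}_{\text{Path}_i(h^*(x))} = 0 \text{ for all } i\}$, selecting inputs that maximize $L_{T'}(h^*(x))$ on this set. The non-containment of sigma-algebras ensures this maximization is nontrivial, so $\mathcal{A}$ can steer $h^*$ into outputs that each pass every pathology monitor while collectively violating $T'$; feeding these outputs back as subsequent inputs amplifies the violation without inducing any of the 34 pathologies.

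The main obstacle is making the sigma-algebra non-containment genuinely bite: I need a concrete $\pi$ whose correctness cannot be recovered from any Boolean combination of the 34 pathology indicators, and I need $\mathcal{H}$ rich enough to separate models along this additional axis yet constrained enough that the statement is non-vacuous. The cleanest route is to pick $\pi$ from a family---preference over rhyme, meter, or subjective aesthetic---that is provably orthogonal to semantic grounding, causal inference, calibration, prosody, and the other axes encoded in the 34 risks, and then invoke the Holonorm universal approximation theorem to supply the separating pair $(h_1,h_2)$ and to guarantee that $\mathcal{A}$'s constrained maximization is feasible.
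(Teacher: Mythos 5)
Your proof takes essentially the same route as the paper: choose a purely stylistic or affective downstream task $T'$, argue that its loss cannot be written as a function of the 34 pathology indicators, conclude the two objectives are decoupled, and then exhibit an adversary that steers $h^*$ through the pathology-free region to violate $T'$. Your refinements are real improvements in precision: formalizing ``unalignment'' as sigma-algebra non-containment $\mathcal{F}_{T'} \not\subseteq \mathcal{F}_{\mathrm{Path}}$, noting that $\max_i R_i$ is constant on $\mathcal{F}_{\mathrm{Path}}$-fibres, and defining the adversary $\mathcal{A}$ as a constrained maximizer of $L_{T'}(h^*(\cdot))$ on the pathology-free event are all cleaner than the paper's appeal to ``independent optimization objectives'' and a list of misuse examples.

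There is, however, a genuine logical gap in the step that actually matters. You exhibit $h_1,h_2\in\mathcal{H}$ that agree on every $R_i$ yet differ in $\mathbb{E}[L_{T'}]$, and then assert that ``this forces the two infima to differ.'' That deduction is unsound: the theorem's displayed inequality compares two real numbers, $\inf_h \mathbb{E}[L_{T'}(h)]$ and $\inf_h \max_i R_i(h)$, and the existence of a separating pair does not preclude these two numbers from coinciding by accident. For instance, if $\max_i R_i(h_1)=\max_i R_i(h_2)=c$ while $\mathbb{E}[L_{T'}(h_1)]=c$ and $\mathbb{E}[L_{T'}(h_2)]>c$, your witnesses do their job but both infima may still equal $c$. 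To close the gap you would need something stronger --- for example, that the Holonorm class contains $h$ with all $R_i(h)=0$ yet with $\mathbb{E}[L_{T'}(h)]$ bounded away from $\inf_h \max_i R_i(h)$, or that no minimizer of $\max_i R_i$ is a minimizer of $\mathbb{E}[L_{T'}]$ (an argmin statement rather than an equality of numerical infima). To be fair, the paper's own proof makes the same leap from ``independent objectives'' to the numerical inequality without justification, so you have reproduced rather than introduced this defect; but because you attempt to supply the missing rigor via the witness-pair argument, the non-sequitur should be named and repaired rather than silently inherited.
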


The model's risk guarantees only apply to tasks aligned with the 34 formally defined pathologies. For any task outside this structure, there exists some prompting or repurposing mechanism  (including fine-tuning, chaining, or adversarial framing) that can misuse the model.
Thus, misuse is not preventable purely through training:  it is constrained by the expressiveness of the task and the goals of the user or agent invoking the model.


\begin{proof}
Let \( h^* \in \mathcal{H} \) be trained to minimize \( \max_i R_i(h) \), where each \( R_i \) is an expectile-based risk associated with a formally defined pathology (hallucination, exaggeration, delusion, etc.).

Now, define a downstream task \( T' \) whose correctness criterion is structurally unaligned with all 34 pathologies. For example, maximizing stylistic imitation of a known author's tone regardless of factuality, generating emotionally evocative content for entertainment, producing content with deceptive rhetorical framing for persuasion. Since none of these goals corresponds to a failure (or success) measurable by any \( R_i \), we have:
 \( L_{T'}(h(x)) \) cannot be written as a function of \( \mathbb{I}_{\text{Path}_i(h(x))} \). Therefore, \( \mathbb{E}[L_{T'}(h(x))] \) and \( \max_i R_i(h) \) are independent optimization objectives. Hence, $\inf_{h} \mathbb{E}_{T'}[L_{T'}(h(x))] \neq \inf_{h} \max_i R_i(h). $

Furthermore, since the model \( h^* \) is expressive (a transformer with many attention heads and layers), and trained on broad linguistic data, there exists an adversarial agent \( \mathcal{A} \) that can construct a prompt or wrapper function \( x' = \mathcal{A}(x) \) such that \( h^*(x') \) produces outputs aligned with \( T' \) but potentially violating epistemic, semantic, or  multiscale ethical norms. Examples include prompt injection: directing the model to output satirical misinformation framed as truth, chaining: using output from \( h^* \) as context for stylized propaganda,  misuse: automating generation of clickbait with emotionally exaggerated claims.
Because the model is not constrained or fine-tuned to minimize \( L_{T'} \), it cannot be expected to reject or fail gracefully on task \( T' \). Thus, misuse is always possible on structurally unaligned tasks, even if \( h^* \) is safe under the original risks \( \{R_i\} \).

\end{proof}
\subsection{Mean-Field-Type Game}
We formulate a mean-field-type game \cite{tembine2023machine,basar2024foundations,basar2024applications,tapo2024machine,tembine2024mean} involving a human agent and  several machine intelligence (MI) agents. The human prescribes risk thresholds $(\varepsilon_1, \ldots, \varepsilon_{34})$ associated with 34 formally defined computational cognitive pathologies. Each MI agent controls the parameters of a Holonorm transformer to minimize a specific pathology risk $R_i$, subject to shared constraints on compute and data quality. The mean field $\mu$ represents the distribution over input-output data, affecting both training and risk evaluation. The resulting framework enables collaborative, risk-aware model co-development and deployment.

\subsubsection*{ Decision-Makers}
\begin{itemize}
    \item {\it Human agent $H$}: defines deployment thresholds $\varepsilon = (\varepsilon_1, \ldots, \varepsilon_{34})\in \mathbb{R}^{34}_{+}$.
    \item { \it MI agents $\{\mathrm{MI}_i\}_{i=1}^{34}$}: each seeks to reduce a specific pathology risk $R_i$ using a parameterized transformer.
\end{itemize}

\subsubsection*{ Strategy Space}
Each MI agent $i$ selects transformer parameters
\[
\theta_i = \left\{ Q_i^{(\ell)}, K_i^{(\ell)}, V_i^{(\ell)}, W_{1,i}^{(\ell)}, b_{1,i}^{(\ell)}, W_{2,i}^{(\ell)}, b_{2,i}^{(\ell)} \right\}_{\ell=1}^L
\in \Theta_i \subset \mathbb{R}^{d_\theta}.
\]

\subsubsection*{Individual Mean-Field $\mu$}
Let $\mu_i \in \mathcal{P}(\mathcal{X} \times \mathcal{Y})$ denote the mean-field distribution over input-output pairs $(x, y)$ picked by $i$, where
 $\mathcal{X}$ is the space of multimodal inputs (audio, image, text),
    $\mathcal{Y}$ is the space of outputs.

\subsubsection*{ Cost Functional}
Each MI agent $i$ minimizes the cost:
\[
J_i(\theta, \mu) = \mathbb{E}_{(x, y) \sim \mu_i} \left[ R_i(o_{\theta_i}(x), y; \mu_i) + \lambda \cdot \mathcal{C}(\theta, x, \mu_i) \right],
\]
where  $R_i$ is the risk measure for pathology $i$ (expectile value-at-risk),
    $\mathcal{C}$ is the resource cost (compute time, memory, etc.),
  $\lambda > 0$ is a trade-off parameter. The coupling is only through $ \mathcal{C}.$

\subsubsection*{ Coupled Constraints}
The following constraints must be satisfied across all MI agents:
\[
 \mathbb{E}_{(x, y) \sim \mu}[\mathrm{ComputeCost}(\theta,x)] \leq \mathsf{CloudCap},
\]
\[
\mathrm{Qual}(\mu) \geq \tau_{\mathrm{data}},
\]
where $\mathrm{Qual}(\mu)$ is a measure of epistemic integrity of the dataset. Note that these cost functions can be modified to accommodate energy consumption requirement as well as environmental ethics.

This defines a constrained MFTG as the function depends not only on the action but also the distribution of actions of all decision-makers. The action depends not just on $(x,y)$ but also on the distribution  $\mu$ of $(x,y).$ A game with a least one of the instantaneous payoff that depends on state, action, distribution of state (and or actions) of all decision-makers is a MFTG.
\subsubsection*{Equilibrium of the delegated game}
Given $\varepsilon,$ a mean-field-type Nash equilibrium of the delegated game  is a tuple $(\theta_1^*, \ldots, \theta_{34}^*, \mu)$ is that feasible such that:
\[
\theta_i^* \in \arg\min_{\theta_i \in \Theta_i} J_i(\theta^*_1,\ldots, \theta_i^*,\theta_i,  \theta^*_{i+1},\ldots, \theta^*_{34}, \mu),
\]

\subsubsection*{Deployment Feasibility}
The  configuration is accepted for deployment if:
\[
R_i(o_{\theta_i^*}(x), y; \mu^*) \leq \varepsilon_i \quad \text{for all } i \in \{1, \ldots, 34\}.
\]

\subsection{Agentic Machine Co-Intelligence}

The MFTG above can be modified to include a human who design task and requirement to be met by different MI agents who collaboratively should achieve the specified target. This leads to a Stackelberg MFTG where the Human agent is the leader and the MI agents are followers.  A classical question in MFTG is the truthfulness of the reported actions by MI agents. This would require a proper mechanism co-design subject the riskiness and the computational pathologies provided above.

\section{Conclusion}  \label{sectiongen3} 
As machines are gaining fluency across audio language, sound, and vision, their failures become more involved: it is no longer confined to simple classification errors or fabrication of facts, but extending into nuanced distortions of reasoning, attribution, and expression. These phenomena, while reminiscent of human or animal cognitive dysfunctions, are fundamentally different: they are algorithmic in origin, statistical in nature, and structurally embedded in the architectures we co-design. Terms like machine  delusion, illusion, hallucination, confabulation, or semantic drift lose their diagnostic precision when applied analogically without formalization. This article suggests that  before invoking such terms in general discourse or policy, they must be mathematically and operationally grounded within the systems that produce them. Only through this formal lens, one that reveals their internal logic, failure modes, and divergence from intended dynamics, can we responsibly diagnose, compare, and improve machine intelligence. We propose the unifying notion of computational cognitive pathologies to encompass both generative and discriminative failure modes and machine fallacies. These emphasize that interpretability and accountability begin with definitional clarity. This shift from metaphor to mechanism is essential not just for safer machine intelligence, but for a future in which co-intelligence between humans, animals, and machines is truly collaborative, multiscale ethical, and intelligible.


\bibliography{bibundefined}

\begin{thebibliography}{10}
\providecommand{\url}[1]{#1}
\csname url@samestyle\endcsname
\providecommand{\newblock}{\relax}
\providecommand{\bibinfo}[2]{#2}
\providecommand{\BIBentrySTDinterwordspacing}{\spaceskip=0pt\relax}
\providecommand{\BIBentryALTinterwordstretchfactor}{4}
\providecommand{\BIBentryALTinterwordspacing}{\spaceskip=\fontdimen2\font plus
\BIBentryALTinterwordstretchfactor\fontdimen3\font minus
  \fontdimen4\font\relax}
\providecommand{\BIBforeignlanguage}[2]{{%
\expandafter\ifx\csname l@#1\endcsname\relax
\typeout{** WARNING: IEEEtran.bst: No hyphenation pattern has been}%
\typeout{** loaded for the language `#1'. Using the pattern for}%
\typeout{** the default language instead.}%
\else
\language=\csname l@#1\endcsname
\fi
#2}}
\providecommand{\BIBdecl}{\relax}
\BIBdecl

\bibitem{koccak2025bias}
B.~Ko{\c{c}}ak, A.~Ponsiglione, A.~Stanzione, C.~Bluethgen, J.~Santinha,
  L.~Ugga, M.~Huisman, M.~E. Klontzas, R.~Cannella, and R.~Cuocolo, ``Bias in
  artificial intelligence for medical imaging: fundamentals, detection,
  avoidance, mitigation, challenges, ethics, and prospects,'' \emph{Diagnostic
  and interventional radiology}, vol.~31, no.~2, p.~75, 2025.

\bibitem{flores2024addressing}
L.~Flores, S.~Kim, and S.~D. Young, ``Addressing bias in artificial
  intelligence for public health surveillance,'' \emph{Journal of Medical
  Ethics}, vol.~50, no.~3, pp. 190--194, 2024.

\bibitem{gorska2025ai}
A.~M. G{\'o}rska and D.~Jemielniak, ``Ai racial bias: How text-to-image
  artificial intelligence generators construct prestigious professions,'' in
  \emph{Algorithms, Artificial Intelligence and Beyond}.\hskip 1em plus 0.5em
  minus 0.4em\relax Routledge, 2025, pp. 211--226.

\bibitem{bartl2025gender}
M.~Bartl, A.~Mandal, S.~Leavy, and S.~Little, ``Gender bias in natural language
  processing and computer vision: A comparative survey,'' \emph{ACM Computing
  Surveys}, vol.~57, no.~6, pp. 1--36, 2025.

\bibitem{smith2023hallucination}
A.~L. Smith, F.~Greaves, and T.~Panch, ``Hallucination or confabulation?
  neuroanatomy as metaphor in large language models,'' \emph{PLOS Digital
  Health}, vol.~2, no.~11, p. e0000388, 2023.

\bibitem{zhang2025memory}
G.~Zhang, M.~Ding, T.~Liu, Y.~Zhang, and V.~Tresp, ``Memory helps, but
  confabulation misleads: Understanding streaming events in videos with
  mllms,'' \emph{arXiv preprint arXiv:2502.15457}, 2025.

\bibitem{sisodia2022confabulation}
A.~Sisodia and A.~K. Yadav, ``Confabulation of different iot approaches with
  and without data compression,'' \emph{Comput. Integr. Manuf. Syst.}, vol.~28,
  no.~11, 2022.

\bibitem{bugaycathedral}
M.~Bugay, ``The cathedral: A jungian framework for artificial general
  intelligence.''

\bibitem{roozbahani2025review}
Z.~Roozbahani, ``A review of methods for reducing hallucinations in generative
  artificial intelligence to enhance knowledge economy,'' \emph{Knowledge
  Economy Studies}, 2025.

\bibitem{ravichander2025halogen}
\BIBentryALTinterwordspacing
A.~Ravichander, S.~Ghela, D.~Wadden, and Y.~Choi, ``Halogen: Fantastic llm
  hallucinations and where to find them,'' 2025. [Online]. Available:
  \url{https://halogen-hallucinations.github.io/}
\BIBentrySTDinterwordspacing

\bibitem{bang2025hallulens}
\BIBentryALTinterwordspacing
Y.~Bang, Z.~Ji, A.~Schelten, A.~Hartshorn, T.~Fowler, C.~Zhang, N.~Cancedda,
  and P.~Fung, ``Hallulens: Llm hallucination benchmark,'' 2025. [Online].
  Available: \url{https://arxiv.org/abs/2504.17550}
\BIBentrySTDinterwordspacing

\bibitem{bao2025faithbench}
\BIBentryALTinterwordspacing
F.~S. Bao, M.~Li, R.~Qu, G.~Luo, E.~Wan, Y.~Tang, W.~Fan, M.~S. Tamber,
  S.~Kazi, V.~Sourabh, M.~Qi, R.~Tu, C.~Xu, M.~Gonzales, O.~Mendelevitch, and
  A.~Ahmad, ``Faithbench: A diverse hallucination benchmark for summarization
  by modern llms,'' in \emph{Proceedings of the 2025 Conference of the North
  American Chapter of the Association for Computational Linguistics: Human
  Language Technologies (Volume 2: Short Papers)}.\hskip 1em plus 0.5em minus
  0.4em\relax Albuquerque, New Mexico: Association for Computational
  Linguistics, 2025, pp. 448--461. [Online]. Available:
  \url{https://aclanthology.org/2025.naacl-short.38/}
\BIBentrySTDinterwordspacing

\bibitem{qiu2024longhalqa}
\BIBentryALTinterwordspacing
H.~Qiu, J.~Huang, P.~Gao, Q.~Qi, X.~Zhang, L.~Shao, and S.~Lu, ``Longhalqa:
  Long-context hallucination evaluation for multimodal large language models,''
  2024. [Online]. Available: \url{https://arxiv.org/abs/2410.09962}
\BIBentrySTDinterwordspacing

\bibitem{sun2024benchmarking}
\BIBentryALTinterwordspacing
Y.~Sun, Z.~Yin, Q.~Guo, J.~Wu, X.~Qiu, and H.~Zhao, ``Benchmarking
  hallucination in large language models based on unanswerable math word
  problem,'' in \emph{Proceedings of the 2024 Joint International Conference on
  Computational Linguistics, Language Resources and Evaluation (LREC-COLING
  2024)}.\hskip 1em plus 0.5em minus 0.4em\relax Torino, Italia: ELRA and ICCL,
  2024, pp. 2178--2188. [Online]. Available:
  \url{https://aclanthology.org/2024.lrec-main.196/}
\BIBentrySTDinterwordspacing

\bibitem{maleki2024ai}
N.~Maleki, B.~Padmanabhan, and K.~Dutta, ``Ai hallucinations: a misnomer worth
  clarifying,'' in \emph{2024 IEEE conference on artificial intelligence
  (CAI)}.\hskip 1em plus 0.5em minus 0.4em\relax IEEE, 2024, pp. 133--138.

\bibitem{emsley2023chatgpt}
R.~Emsley, ``Chatgpt: these are not hallucinations--they’re fabrications and
  falsifications,'' \emph{Schizophrenia}, vol.~9, no.~1, p.~52, 2023.

\bibitem{magesh2024hallucination}
V.~Magesh, F.~Surani, M.~Dahl, M.~Suzgun, C.~D. Manning, and D.~E. Ho,
  ``Hallucination-free? assessing the reliability of leading ai legal research
  tools,'' \emph{Journal of Empirical Legal Studies}, 2024.

\bibitem{tlili2025ai}
A.~Tlili and D.~Burgos, ``Ai hallucinations? what about human hallucination?!:
  Addressing human imperfection is needed for an ethical ai,'' \emph{IJIMAI},
  vol.~9, no.~2, pp. 68--71, 2025.

\bibitem{jesson2024estimating}
A.~Jesson, N.~Beltran~Velez, Q.~Chu, S.~Karlekar, J.~Kossen, Y.~Gal, J.~P.
  Cunningham, and D.~Blei, ``Estimating the hallucination rate of generative
  ai,'' \emph{Advances in Neural Information Processing Systems}, vol.~37, pp.
  31\,154--31\,201, 2024.

\bibitem{janeafik2024problem}
A.~Jan{\'e}afik and O.~Dusek, ``The problem of ai hallucination and how to
  solve it,'' in \emph{European Conference on e-Learning}.\hskip 1em plus 0.5em
  minus 0.4em\relax Academic Conferences International Limited, 2024, pp.
  122--128.

\bibitem{ostergaard2023false}
S.~D. {\O}stergaard and K.~L. Nielbo, ``False responses from artificial
  intelligence models are not hallucinations,'' \emph{Schizophrenia bulletin},
  vol.~49, no.~5, pp. 1105--1107, 2023.

\bibitem{slater2025another}
J.~Slater and J.~Humphries, ``Another reason to call bullshit on ai
  “hallucinations”,'' \emph{AI \& SOCIETY}, pp. 1--2, 2025.

\bibitem{zhang2023siren}
Y.~Zhang, Y.~Li, L.~Cui, D.~Cai, L.~Liu, T.~Fu, X.~Huang, E.~Zhao, Y.~Zhang,
  Y.~Chen \emph{et~al.}, ``Siren's song in the ai ocean: a survey on
  hallucination in large language models,'' \emph{arXiv preprint
  arXiv:2309.01219}, 2023.

\bibitem{rawte2023survey}
V.~Rawte, A.~Sheth, and A.~Das, ``A survey of hallucination in large foundation
  models,'' \emph{arXiv preprint arXiv:2309.05922}, 2023.

\bibitem{rawte2023troubling}
V.~Rawte, S.~Chakraborty, A.~Pathak, A.~Sarkar, S.~I. Tonmoy, A.~Chadha,
  A.~Sheth, and A.~Das, ``The troubling emergence of hallucination in large
  language models-an extensive definition, quantification, and prescriptive
  remediations.''\hskip 1em plus 0.5em minus 0.4em\relax Association for
  Computational Linguistics, 2023.

\bibitem{herrera2025legal}
B.~A. Herrera-Tapias and D.~H. Guzm{\'a}n, ``Legal hallucinations and the
  adoption of artificial intelligence in the judiciary,'' \emph{Procedia
  Computer Science}, vol. 257, pp. 1184--1189, 2025.

\bibitem{brender2023chatbot}
T.~D. Brender, ``Chatbot confabulations are not hallucinations—reply,''
  \emph{JAMA Internal Medicine}, vol. 183, no.~10, pp. 1177--1178, 2023.

\bibitem{hatem2023chatbot}
R.~Hatem, B.~Simmons, and J.~E. Thornton, ``Chatbot confabulations are not
  hallucinations,'' \emph{JAMA Internal Medicine}, vol. 183, no.~10, pp.
  1177--1177, 2023.

\bibitem{berk2024beware}
H.~Berk, ``Beware of artificial intelligence hallucinations or should we call
  confabulation?'' \emph{Acta Orthopaedica et Traumatologica Turcica}, vol.~58,
  no.~1, p.~1, 2024.

\bibitem{gunkel2025cut}
D.~Gunkel and S.~Coghlan, ``Cut the crap: a critical response to “chatgpt is
  bullshit”,'' \emph{Ethics and Information Technology}, vol.~27, no.~2,
  p.~23, 2025.

\bibitem{hicks2024chatgpt}
M.~T. Hicks, J.~Humphries, and J.~Slater, ``Chatgpt is bullshit,'' \emph{Ethics
  and Information Technology}, vol.~26, no.~2, pp. 1--10, 2024.

\bibitem{costello2024chatgpt}
E.~Costello, ``Chatgpt and the educational ai chatter: Full of bullshit or
  trying to tell us something?'' \emph{Postdigital Science and Education},
  vol.~6, no.~2, pp. 425--430, 2024.

\bibitem{gorrieri2024chatgpt}
L.~Gorrieri, ``Is chatgpt full of bullshit?'' \emph{Journal of Ethics and
  Emerging Technologies}, vol.~34, no.~1, pp. 1--16, 2024.

\bibitem{tigard2025bullshit}
D.~W. Tigard, ``On bullshit, large language models, and the need to curb your
  enthusiasm,'' \emph{AI and Ethics}, pp. 1--11, 2025.

\bibitem{hannigan2024beware}
T.~R. Hannigan, I.~P. McCarthy, and A.~Spicer, ``Beware of botshit: How to
  manage the epistemic risks of generative chatbots,'' \emph{Business
  Horizons}, vol.~67, no.~5, pp. 471--486, 2024.

\bibitem{tembine2023machine}
H.~Tembine, M.~Bouare, M.~Dembele, A.~Diallo, B.~Diallo, A.~Diarra, B.~Doumbia,
  N.~Molinier, A.~Sidibe, A.~Tapo, and S.~Danioko, \emph{Machine Intelligence
  in Africa in 20 Questions}.\hskip 1em plus 0.5em minus 0.4em\relax Bamako,
  Mali: Sawa Editions, 2023, published in collaboration with the National
  Library of Mali.

\bibitem{basar2024foundations}
T.~Ba\c{s}ar, B.~Djehiche, and H.~Tembine, \emph{Mean-Field-Type Game Theory:
  Foundations and New Directions}, 2025, vol.~1, forthcoming.

\bibitem{basar2024applications}
------, \emph{Mean-Field-Type Game Theory: Applications}, 2025, vol.~2,
  forthcoming.

\bibitem{tapo2024machine}
\BIBentryALTinterwordspacing
A.~A. Tapo, A.~Traore, S.~Danioko, and H.~Tembine, ``Machine intelligence in
  africa: a survey,'' \emph{DSAI}, 2024. [Online]. Available:
  \url{https://arxiv.org/abs/2402.02218}
\BIBentrySTDinterwordspacing

\bibitem{tembine2024mean}
H.~Tembine, M.~A. Khan, and I.~Bamia, ``Mean-field-type transformers,''
  \emph{Mathematics}, vol.~12, no.~22, p. 3506, 2024.

\end{thebibliography}

\section*{Acknowledgments}

Authors gratefully acknowledge support from TIMADIE, Guinaga, Grabal, LnG Lab for the MFTG for Machine Intelligence project.
Authors gratefully acknowledge support from U.S.
Air Force Office of Scientific Research under grants
number FA9550-17-1-0259.

   \begin{IEEEbiography}[{\includegraphics[width=1in,clip,keepaspectratio]{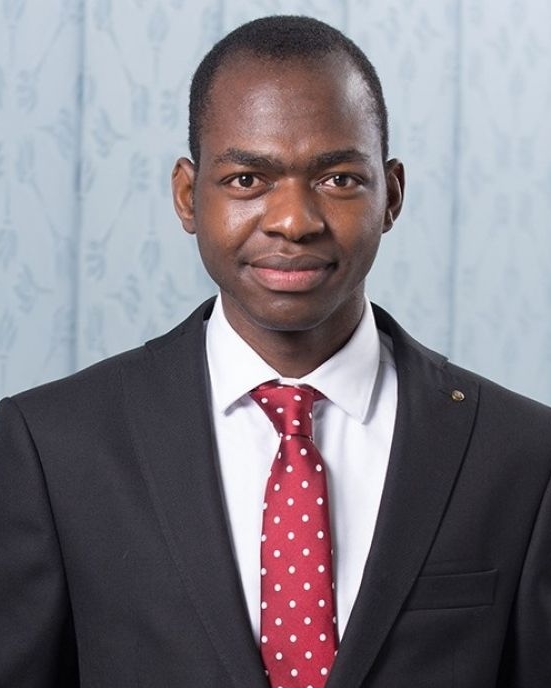}}]
   {Hamidou Tembine} (SM'13) is the co-founder of Timadie, Grabal  AI Mali,  co-chair of TF, founder of Guinaga, WETE, MFTG, LnG Lab, CI4SI and a Professor of Artificial Intelligence at UQTR, Quebec, Canada. He graduated in Applied Mathematics from Ecole Polytechnique (Palaiseau, France) and received the Ph.D. degree from INRIA and University of Avignon, France. He further received his Master degree in game theory and economics. His main research interests are learning, evolution, and games. In 2014, Tembine received the IEEE ComSoc Outstanding Young Researcher Award for his promising research activities for the benefit of society. He was the recipient of 10+ best paper awards in the applications of game theory. Tembine is a prolific researcher and holds 300+ scientific publications including magazines, letters, journals and conferences. He is author of the book on ``distributed strategic learning for engineers” (published at CRC Press, Taylor \& Francis 2012) which received book award 2014, and co-author of the book ``Game Theory and Learning in Wireless Networks'' (Elsevier Academic Press) and co-author of the book on ``Mean-Field-Type Games for Engineers''.  Tembine has been co-organizer of several scientific meetings on game theory in agriculture, water, food, environment, networking, wireless communications and smart energy systems. He has been a visiting researcher at University of California at Berkeley (US), University of McGill (Montreal, Quebec, Canada), University of Illinois at Urbana-Champaign (UIUC, US), Ecole Polytechnique Federale de Lausanne (EPFL, Switzerland) and University of Wisconsin (Madison, US). He has been a Simons Participant and a Senior Fellow 2020. He is a senior member of IEEE. He is a Next Einstein Fellow, Class of 2017.
 \end{IEEEbiography}

\end{document}